\def\eqref#1{equation~\ref{#1}}
\def\1{\bm{1}}
\DeclareMathAlphabet{\mathsfit}{\encodingdefault}{\sfdefault}{m}{sl}
\SetMathAlphabet{\mathsfit}{bold}{\encodingdefault}{\sfdefault}{bx}{n}
\theoremstyle{plain} 
\newtheorem{theorem}{Theorem}[section] 
\newtheorem{proposition}[theorem]{Proposition}
\newtheorem{assumption}[theorem]{Assumption}
\newtheorem{lemma}{Lemma}
\title{Margin Adaptive DPO: Leveraging Reward Model for Granular Control in Preference Optimization}
\author{\name Hyung Gyu Rho \email sirano1004@gmail.com \\
      Independent Researcher
      }
\begin{document}

\maketitle

\begin{abstract}
Direct Preference Optimization (DPO) has emerged as a simple and effective method for aligning large language models. However, its reliance on a fixed temperature parameter leads to suboptimal training on diverse preference data, causing overfitting on easy examples and under-learning from informative ones. Recent methods have emerged to counter this. While Identity Preference Optimization (IPO) addresses general overfitting, its uniform regularization can be overly conservative. The more targeted approach of $\beta$-DPO suffers from its own limitations: its batch-level adaptation applies a single, compromised temperature to mixed-margin pairs, its linear update rule can produce unstable negative $\beta$ values, and its filtering mechanism discards potentially useful training signals.

In this work, we introduce Margin-Adaptive Direct Preference Optimization (MADPO), a method that provides a stable, data-preserving, and instance-level solution. MADPO employs a practical two-step approach: it first trains a reward model to estimate preference margins and then uses these margins to apply a continuous, adaptive weight to the DPO loss for each individual training sample. This re-weighting scheme creates an effective target margin that is amplified for hard pairs and dampened for easy pairs, allowing for granular control over the learning signal.

We provide a comprehensive theoretical analysis, proving that MADPO has a well-behaved optimization landscape and is robust to reward model estimation errors. We validate our theory with experiments on a summarization task using human preference data. MADPO consistently outperforms strong baselines across a comprehensive sweep of decoding temperatures.
\end{abstract}

\section{Introduction}
Aligning Large Language Models (LLMs) with human preferences has become a cornerstone of modern AI, enabling models that are more helpful and harmless \citep{bai2022training}, follow complex instructions \citep{ouyang2022training}, and excel at sophisticated tasks \citep{ziegler2019fine}. The dominant paradigm for this is learning from preference data, which has given rise to a range of powerful techniques. While early successes were driven by multi-stage methods like Reinforcement Learning from Human Feedback (RLHF), the field has recently seen the emergence of more direct and stable approaches, most notably Direct Preference Optimization (DPO) \citep{rafailov2023direct}.

While DPO offers a more direct approach to preference alignment, its effectiveness is constrained by a critical factor: the joint influence of the temperature parameter, $\beta$, and the quality of the preference data. Seminal work in this area by \cite{wu2024beta} demonstrated that the optimal choice of $\beta$ is highly contingent on the reward margin of a given pair. Their analysis revealed that easy pairs with a large margin benefit from a high, conservative $\beta$ to prevent overfitting, whereas hard pairs with a subtle margin require a low, aggressive $\beta$ to ensure the learning signal is captured. The vanilla DPO framework, with its single fixed $\beta$ applied to all samples, is fundamentally unable to reconcile these competing requirements. This inherent tension has motivated recent work on adaptive regularization strategies, which aim to tailor the learning objective to the difficulty of each preference pair.

The challenge of adaptive regularization has led to several innovations that improve upon vanilla DPO. Identity Preference Optimization (IPO) \citep{azar2024general}, for instance, effectively mitigates the general overfitting issue by replacing the loss function with a squared-error objective. While not explicitly designed to resolve the tension between high- and low-margin data, its uniform target margin partially addresses the problem by regularizing easy pairs, though at the risk of being overly conservative on more informative examples. The most direct attempt to solve this is $\beta$-DPO \citep{wu2024beta}, which introduces adaptive, batch-level strategies. However, while demonstrating improved results, its mechanisms introduce significant new challenges. Its $\beta$-batch adaptation, for instance, is potentially unstable—producing a divergent negative $\beta$ for difficult data—and applies a single, compromised temperature to mixed-margin batches. Furthermore, its $\beta$-guided filtering approach can be data-inefficient, as it potentially discards very high and low margin samples that may still contain useful learning signals. These issues of potential instability, coarse granularity, and data inefficiency highlight the need for a solution that is not only instance-level and data-preserving, but also inherently stable.

In this paper, to address these challenges, we introduce Margin-Adaptive Direct Preference Optimization (MADPO), a method that precisely controls the DPO objective through a practical two-step process. First, we train a standard reward model to learn how strongly one response is preferred over another for each training example. Our approach then leverages this reward model to guide the DPO policy, which works by learning to match the preferences captured by the reward model. MADPO strategically modifies the strength of the preference signal from the reward model before showing it to the policy. For hard and informative pairs, it amplifies the signal to make the preference seem stronger, forcing the policy to learn more aggressively, achieving the same effect as a low $\beta$. Conversely, for easy and uninformative examples where the preference is already obvious, it dampens the signal to make the preference seem weaker, which provides a stabilizing, per-sample regularization, achieving the same effect as a high $\beta$. This strategic modification of the preference signal allows for granular, instance-level control, making the alignment process more data-efficient.

Our theoretical analysis validates the design of MADPO. We demonstrate that its instance-level weighting scheme successfully regularizes the learning objective for easy preference pairs while amplifying the signal for difficult ones, all while maintaining a well-behaved and stable optimization landscape with bounded gradients. Crucially, we also prove that this granular control is not brittle; our analysis provides a formal guarantee that the practical two-step algorithm is robust to the estimation errors inherent in reward modeling. These theoretical results establish MADPO as a principled and reliable method.

We validate our theoretical claims with a series of experiments on a real-world summarization task using the Reddit TL;DR dataset. Evaluated on a 7B parameter model, our results demonstrate that MADPO consistently outperforms strong baselines—including Vanilla DPO, IPO, and $\beta$-DPO across a comprehensive sweep of decoding temperatures. MADPO maintains a steady net win rate advantage, consistently outperforming the most competitive baseline, $\beta$-DPO, by approximately 5\% across standard temperatures and expanding to an +11.3\% gain under high-entropy sampling. Further analysis provides deeper insight into our method's mechanics: a detailed ablation study reveals that while the amplification mechanism drives the most substantial performance leaps, the explicit regularization component acts as a necessary safeguard for optimizing peak deterministic generation via greedy decoding. Finally, an empirical stress test with adversarial preference labels directly confirms our theoretical guarantees regarding gradient stability, demonstrating that MADPO's piecewise formulation actively prevents optimization collapse.

\section{Preliminaries}

The goal of preference alignment is to fine-tune a language model policy $\pi_\theta$, parameterized by $\theta$, using a dataset of human preferences $\mathcal{D} = \{(x, y_w, y_l)\}_{i=1}^N$. For each prompt $x$, $y_w$ is the response preferred over the response $y_l$. The alignment process is typically framed by modeling the probability of these preferences.

\subsection{Reinforcement Learning from Human Feedback (RLHF)}
The RLHF paradigm \citep{ouyang2022training} aligns a policy in two main stages: reward modeling and policy optimization.

\paragraph{1. Reward Modeling.}
This stage aims to learn a reward model $r_\phi(x,y)$, parameterized by $\phi$, that reflects human preferences. The probability that $y_w$ is preferred to $y_l$ is modeled using the Bradley-Terry-Luce (BTL) framework \citep{bradley1952rank, luce1959individual}:
\begin{align}
P(y_w \succ y_l | x; \phi^*) &= \sigma(h_{\phi^*}(x,y_w,y_l))\nonumber\\
&= \sigma(r_{\phi^*}(x, y_w) - r_{\phi^*}(x, y_l)), \label{eq:btl}
\end{align}
where $\sigma(\cdot)$ is the logistic function. The optimal reward model parameters, $\phi^*$, are found by maximizing the likelihood of the preference dataset, which corresponds to minimizing the following negative log-likelihood loss:
\begin{equation}
\mathcal{L}_{\text{RM}}(r_\phi) = - \mathbb{E}_{(x, y_w, y_l) \sim \mathcal{D}} \left[ \log \sigma(h_{\phi}(x,y_w,y_l)) \right]. \label{eq:rm_loss}
\end{equation}

\paragraph{2. Policy Optimization.}
In the second stage, the policy $\pi_\theta$ is then fine-tuned using the trained reward model $r_{\hat{\phi}}(x,y)$, where $\hat{\phi}$ is the empirical estimate of the optimal parameters $\phi^*$. The policy is optimized to maximize the expected reward while being regularized by a Kullback-Leibler (KL) divergence penalty against a reference policy $\pi_{\text{ref}}$:
\begin{equation}
\max_{\theta} \quad \mathbb{E}_{x \sim \mathcal{D}_{p}, y \sim \pi_\theta(y|x)}[r_{\hat{\phi}}(x, y)] - \beta D_{\text{KL}}(\pi_\theta(y|x) || \pi_{\text{ref}}(y|x)). \nonumber
\end{equation}
Here, $\beta$ is a hyperparameter that controls the strength of the KL regularization, and $\mathcal{D}_{p}$ represents the dataset of prompts.

\subsection{Direct Preference Optimization (DPO)}
DPO \citep{rafailov2023direct} is an alternative that bypasses the explicit reward modeling and reinforcement learning stages. The key insight is that the optimal solution to the KL-regularized objective has a closed-form solution that connects the optimal policy $\pi_{\theta^*}$ to the optimal reward function $r_{\phi^*}$:
\begin{equation}
r_{\phi^*}(x, y) = \beta \log \frac{\pi_{\theta^*}(y|x)}{\pi_{\text{ref}}(y|x)} + \beta \log Z(x), \label{eq:dpo_mapping}
\end{equation}
where $Z(x)$ is the partition function that normalizes the distribution.

By substituting this mapping (Eq.~\ref{eq:dpo_mapping}) into the BTL preference model, the likelihood can be expressed directly in terms of the policy $\pi_\theta$. This allows for end-to-end optimization of the policy by minimizing a single negative log-likelihood loss:
\begin{align*}
\mathcal{L}_{\text{DPO}}(\pi_\theta; \pi_{\text{ref}}) &= - \mathbb{E}_{(x, y_w, y_l) \sim \mathcal{D}} \left[ \log \sigma \left( \beta h_\theta(x,y_w,y_l) \right) \right] \nonumber\\
&= - \mathbb{E}_{(x, y_w, y_l) \sim \mathcal{D}} \left[ \log \sigma \left( \beta \log \frac{\pi_\theta(y_w|x)}{\pi_{\text{ref}}(y_w|x)} - \beta \log \frac{\pi_\theta(y_l|x)}{\pi_{\text{ref}}(y_l|x)} \right) \right].
\end{align*}
For notational clarity, we define the following reward margin functions:
\begin{itemize}
    \item The \textbf{explicit reward margin} from a reward model $r_\phi$:
    $$h_\phi(x, y_w, y_l) = r_\phi(x, y_w) - r_\phi(x, y_l).$$
    \item The \textbf{implicit reward margin} from a policy $\pi_\theta$:
    $$h_\theta(x, y_w, y_l) = \log \frac{\pi_\theta(y_w|x)}{\pi_{\text{ref}}(y_w|x)} - \log \frac{\pi_\theta(y_l|x)}{\pi_{\text{ref}}(y_l|x)}.$$
\end{itemize}
\subsection{Limitations of Vanilla DPO}
The relationship between the true reward function and the optimal policy is well-defined. From Eq.~\ref{eq:dpo_mapping}, we can see that the explicit reward margin is proportional to the implicit reward margin:
$$
h_{\phi^*}(x, y_w, y_l) = \beta h_{\theta^*}(x, y_w, y_l).
$$
Here, the inverse temperature $\beta$ acts as a global hyperparameter. A smaller $\beta$ encourages a larger difference in the policy's log-ratios for a given explicit reward margin, promoting more aggressive, confident updates. Conversely, a larger $\beta$ encourages more conservative updates.

However, recent work has revealed that using a single, static $\beta$ as a global hyperparameter is a significant limitation of the vanilla DPO framework, often leading to overfitting. As argued by \cite{azar2024general}, this issue is particularly acute on finite datasets. If all annotators in a sample unanimously prefer one response, the empirical explicit reward margin becomes infinite. To match this, the DPO objective will push the learned log-policy difference, $h_{\hat{\theta}}$, to be arbitrarily large, causing the model to become overconfident and overfit to the winning response.

Complementing this finding, \cite{wu2024beta} suggest that a single $\beta$ is insufficient for handling the diverse quality of preference data. They find that easy pairs with a large explicit reward margin are best handled with a high $\beta$ (a more conservative update) to prevent overfitting. In contrast, hard pairs with a small, subtle margin require a low $\beta$ (a more aggressive update) to effectively learn the preference signal. This tension reveals the need for a more dynamic, instance-aware approach to regularization, which motivated the development of subsequent methods like IPO and $\beta$-DPO.

\subsection{Identity Preference Optimization (IPO)}
Identity Preference Optimization (IPO) \citep{azar2024general} addresses the overfitting issue by replacing the log-likelihood objective with a squared-error loss, 
$$\mathcal{L}_{\text{IPO}}(\pi_\theta, \pi_{ref}) = \mathbb{E}_{(x, y_w, y_l) \sim \mathcal{D}}  \left[ \left(h_\theta(x,y_w,y_l) - \frac{1}{2\beta}\right)^2\right].$$
The mechanism of IPO can be understood by analyzing the optimality condition of its loss function. The squared-error loss is minimized for any given sample when the implicit reward margin satisfies $\beta h_{\theta^*}(x, y_w, y_l) = 1/2$. Unlike DPO, which attempts to match the true explicit reward margin $h_{\phi^*}$, IPO effectively sets a single, uniform target margin of $1/2$ for every preference pair in the dataset. This has a dual effect: for hard pairs where the true explicit margin is small ($h_{\phi^*} < 1/2$), the higher target margin amplifies the learning signal. Conversely, for easy pairs where the true explicit margin is large ($h_{\phi^*} > 1/2$), the low target margin aggressively dampens the signal, which is the source of the regularization.

\subsection{$\beta$-DPO}
To address the limitations of a fixed temperature, $\beta$-DPO \citep{wu2024beta} introduces adaptive strategies to modulate the learning process. It proposes two primary mechanisms:

\paragraph{Batch-Level $\beta$ Adaptation ($\beta$-batch).}
This approach adapts the temperature $\beta$ for each training batch using a linear function of the batch's average implicit reward margin, $\bar{h}_\theta$. The batch-specific temperature, $\beta_{\text{batch}}$, is set as:
$$
\beta_{\text{batch}} = \beta(1 + m \cdot (\bar{h}_\theta - h_0)), \quad \text{where} \quad \bar{h}_\theta = \frac{1}{|\text{batch}|} \sum_{(x,y_w,y_l) \in \text{batch}} h_\theta(x,y_w,y_l).
$$
Here, $\beta$ is a base temperature, $m$ is a scaling factor between zero and one, and $h_0$ is a predetermined threshold. This allows batches with higher average implicit margins to be trained with a higher, more conservative $\beta_{\text{batch}}$.

\paragraph{$\beta$ guided filtering.}
This approach modifies the training data by stochastically filtering each batch. For each sample $(x,y_w,y_l)$ in a batch, a score is computed using the probability density function of a Normal distribution, $\mathcal{N}(\cdot \ ; h_0, \sigma^2)$, evaluated at the policy's current implicit margin, $h_\theta(x,y_w,y_l)$. The score is highest for samples whose margin is close to the mean $h_0$. A new, smaller batch is then formed by performing weighted random sampling without replacement from batch, where the probability of selecting a sample is proportional to its score. This method dynamically focuses training on examples of a target difficulty level, effectively down-sampling both overly easy and potentially noisy pairs.

\section{Margin Adaptive Direct Preference Optimization (MADPO)}
In this section, we introduce Margin Adaptive Direct Preference Optimization (MADPO), a method that enhances the DPO objective by adaptively re-weighting each training sample. The core idea is to modulate the loss based on the explicit reward margin, $h_\phi$, to amplify the learning signal from informative low-margin pairs while dampening it for easy, high-margin pairs to prevent overconfidence. This provides a more granular and flexible approach to regularization than IPO and $\beta$-DPO. We begin by detailing the central component of our method: a continuous, margin adaptive weight function. We then define the full MADPO loss function and describe the practical two-step algorithm for its optimization.

\subsection{Margin Adaptive Weight}
The central component of our MADPO method is the weight function which adaptively modulates the learning objective for each training sample based on its preference margin. This subsection provides a detailed exposition of this function's design, its hyperparameters, and the reasoning behind its piecewise structure. To simplify the notation, where the context is unambiguous, any function will be denoted by its symbol alone, suppressing the explicit dependence on its arguments for conciseness.

The core of our method is a coefficient function, $c: \mathbb{R} \to [c_{\min}, c_{\max}]$, which maps the explicit reward margin $h_{\phi}$ to a modulating scalar. This function is designed to be greater than 1 for low margins and less than 1 for high margins. It is defined as:
\begin{equation}
    c(h_{\phi}) = c_{\min} + \frac{c_{\max} - c_{\min}}{1 + \left(\frac{c_{\max} - 1}{1 - c_{\min}}\right) \exp\left(\lambda (h_{\phi} - \tau)\right)}, \nonumber
\end{equation}
Using this margin-dependent coefficient, we define a piecewise weighting function, $w(h_{\phi})$, which selectively modifies the likelihood ratio.
\begin{equation}
    w(h_{\phi}) =
    \begin{cases}
        \frac{\sigma(c(|h_{\phi}|) \cdot h_{\phi})}{\sigma(h_{\phi})} & \text{if } h_{\phi} > -\tau \\
        1 & \text{if } h_{\phi} \le -\tau
    \end{cases}\label{eq:weight}
\end{equation}

The threshold $\tau > 0$ provides a concrete definition for what constitutes a high-margin or easy preference pair. This value can be chosen based on practitioner judgment or derived from the data. Specifically, any pair is classified as high-margin if its absolute explicit reward margin satisfies $|h_{\phi}| \ge \tau$, and as low-margin if its explicit margin satisfies $|h_{\phi}| < \tau$.

The parameter $c_{\max}$ acts as the amplifier for low-margin pairs. As the explicit margin $|h_{\phi}|$ approaches zero, the coefficient approaches $c_{\max}$. This forces the model to learn more aggressively from the most informative and subtle preferences. A higher value provides a stronger signal boost, but risks overfitting to noise in these difficult examples. By definition, this parameter must be greater than one to ensure amplification.

The parameter $c_{\min}$ acts as the dampener for high-margin pairs, setting a floor on the regularization. As the margin grows, the learned target gap is scaled down by a factor approaching $c_{\min}$. A value near 0 instructs the model to almost entirely ignore obvious preferences, preventing overconfidence. A value closer to 1 instructs the model to still learn from them, but with less intensity. By definition, this parameter must be between zero and one, $c_{\min}\in[0,1)$, to ensure damping.

The parameter $\lambda$ controls the sharpness of the transition around the threshold $\tau$. A large $\lambda$ creates a steep, switch-like change, treating samples on either side of $\tau$ very differently. A small $\lambda$ creates a much more gradual and smooth transition from amplification to dampening. 

While these intuitions provide strong starting points for setting the parameters, their optimal values are typically dataset-dependent. Therefore, the most rigorous approach is to perform a hyperparameter search, using a method like cross-validation on a held-out set of preference data to find the combination that yields the best empirical performance.

The piecewise nature of the weight function is a crucial design choice for ensuring training stability. While the re-weighting mechanism works as intended for most pairs, it can lead to undesirable behavior at the extremes of the margin distribution without this piecewise control.

We analyze the behavior of the core ratio $\sigma(c( h_{\phi^*}|) \cdot h_{\phi^*}) /\sigma(h_{\phi^*})$, evaluated at the optimal reward model parameter $\phi^*$, in two distinct cases:
\begin{itemize}
    \item \textbf{For large positive margins ($h_{\phi^*} \gg \tau$):} In this region, where $c \approx c_{\min}$, the weight function converges to a small, stable, positive value. This correctly applies a consistent penalty to all easy pairs, achieving the desired regularization effect.

    \item \textbf{For large negative margins ($h_{\phi^*} \ll -\tau$):} Without the piecewise cutoff, the weight function would explode. As $h_{\phi^*} \to -\infty$, the weight can be approximated by $w(h_{\phi^*}) \approx e^{(c_{\min}-1)h_{\phi^*}}$. Since $c_{\min} < 1$ and $h_{\phi^*}$ is negative, the exponent is positive and grows linearly with $|h_{\phi^*}|$. This growth in the weight would assign a massive, potentially infinite loss to these samples, leading to severe gradient instability.
\end{itemize}
The piecewise definition elegantly solves this problem. By setting $w(h_{\phi^*}) = 1$ for all $h_{\phi^*} \le -\tau$, we cap this potential explosion. This ensures that samples with very large negative margins—which may be mislabeled or adversarial—are handled by the vanilla, stable DPO loss instead of causing the training to diverge. This design allows us to achieve our desired penalization for high positive margins without sacrificing the stability of the overall training process. The impact of this weighting on the final loss function is discussed in the following sections.
\subsection{Loss Function and Optimization}

Having defined the margin adaptive weight, $w(h_{\phi})$, we now formally incorporate it into our final loss function. We then detail the practical, two-step procedure used to train a policy with this new objective.

\paragraph{The MADPO Loss Function.}
The MADPO loss for a single preference pair is the vanilla DPO log-likelihood, re-weighted by our margin-dependent weight function:
\begin{equation}
    \mathcal{L}(\theta, \phi; x, y_w, y_l) = - w(h_{\phi}(x, y_w, y_l)) \log \sigma(\beta h_\theta (x, y_w, y_l)). \label{eq:madpo_loss}
\end{equation}
This loss depends on both the policy parameters, $\theta$, (through $h_\theta$) and the reward model parameters, $\phi$, (through $h_\phi$). To optimize this effectively, we employ a two-step approach.

\paragraph{Step 1: Reward Model Estimation.}
First, we obtain a high-quality estimate of the preference margins. This is achieved by training a standard reward model, $r_\phi$, on the preference dataset $\mathcal{D}$ to find the estimated parameters, $\hat{\phi}$. This step is identical to the reward modeling stage of traditional RLHF:
\begin{equation}
    \hat{\phi} = \underset{\phi}{\operatorname{argmin}} \ \mathcal{L}_{\text{RM}}(r_\phi).\nonumber
\end{equation}

\paragraph{Step 2: Margin Adaptive Policy Optimization.}
Second, we treat the estimated reward parameters $\hat{\phi}$ as a fixed, ground-truth source of preference margins. These parameters are plugged into our MADPO loss function, which now becomes an objective solely for the policy. We then find the final policy parameters, $\hat{\theta}$, by minimizing the expectation of this loss over the dataset:
\begin{equation}
    \hat{\theta} = \underset{\theta}{\operatorname{argmin}} \ \mathcal{L}(\theta, \hat{\phi}) = \underset{\theta}{\operatorname{argmin}} \ \mathbb{E}_{(x,y_w,y_l) \sim \mathcal{D}} \left[ \mathcal{L}(\theta, \hat{\phi}; x, y_w, y_l) \right].\nonumber
\end{equation}
This two-step process provides a stable and practical method for training a policy that is explicitly aware of the nuance and difficulty of the preference data it learns from.

\section{Theoretical Analysis}
In this section, we provide the theoretical justification for the MADPO algorithm. Our analysis proceeds in three parts. First, we establish that our method achieves its primary goal: in an idealized oracle setting, MADPO encourages the policy to be confident on informative, low-margin pairs while achieving the desired regularization for high-margin pairs. Second, we prove a formal performance guarantee for our practical two-step algorithm, showing that the loss function is Lipschitz continuous and therefore robust to errors from the reward estimation stage. Finally, we demonstrate that the gradient and Hessian of the MADPO loss are scaled versions of their vanilla DPO counterparts, which shows that our method provides controllable training stability while retaining the well-behaved optimization landscape of DPO.

\subsection{Oracle Characterization of Margin-Adaptive Regularization}
In this section, we formally characterize the behavior of the MADPO loss function, showing how it achieves the dual goals of amplifying the learning signal for informative low-margin pairs and regularizing the objective for high-margin pairs. To isolate this mechanism, we conduct our analysis in an oracle setting, assuming access to the true optimal reward parameters, $\phi^*$.

We begin by characterizing the optimal implicit reward margin that minimizes the MADPO loss across the entire dataset.

\begin{proposition}\label{proposition1}
Under the BTL model with an optimal reward model $r_{\phi^*}$, the optimal policy parameter $\theta^*$ that minimizes the MADPO loss $\mathcal{L}(\theta, \phi^*;x,y_w,y_l)$ satisfies the following for any preference pair $(x, y_w, y_l) \in \mathcal{D}$:
\begin{equation}
    \beta h_{\theta^*}(x, y_w, y_l) = c(|h_{\phi^*} (x, y_w, y_l)|) \cdot h_{\phi^*} (x, y_w, y_l). \nonumber
\end{equation}
\end{proposition}

This global optimality condition reveals that the policy is not optimized to simply match the true explicit reward margin, $h_{\phi^*}$, but rather a target scaled by the coefficient $c(|h_{\phi^*}|)$. This identity acts as the mathematical foundation for MADPO's amplification mechanism. For any subtle, low-margin pair ($|h_{\phi^*}| < \tau$), our method sets $c > 1$ by construction. Consequently, Proposition \ref{proposition1} dictates that the policy must learn an amplified preference target, aggressively increasing the separation in its log-ratios to a greater degree than what the explicit reward alone would suggest.

While Proposition \ref{proposition1} establishes the scaled target, we must also ensure that manipulating the coefficient $c$ serves as a reliable, predictable control mechanism -- particularly when we want to suppress the learning signal.

\begin{proposition}\label{proposition2}
Under the BTL model with an optimal reward model $r_{\phi^*}$, the optimal implicit reward $\beta h_{\theta^*}$ is monotonically increasing with respect to the coefficient $c \equiv c(|h_{\phi^*}|)$ for any preference pair $(x, y_w, y_l) \in \mathcal{D}$. Formally:
\begin{equation}
    \frac{\partial (\beta h_{\theta^*})}{\partial c} > 0. \nonumber
\end{equation}
\end{proposition}

This proposition provides the formal guarantee required for MADPO's regularization on high-margin data. It establishes that the learned implicit reward, $\beta h_{\theta^*}$, is strictly and monotonically controlled by $c$. For any easy preference pair ($|h_{\phi^*}| \ge \tau$), our method sets $c < 1$. The monotonic relationship guarantees that this reduced coefficient strictly shrinks the optimization target relative to the true explicit margin, $h_{\phi^*}$. This acts as a powerful regularization tool, reliably dampening the learning signal and preventing the policy from becoming overconfident or overfitting to saturated, less informative examples. 

While we instantiate the coefficient $c(|h_{\phi^*}|)$ using the specific continuous function defined in Equation~\ref{eq:weight} for our empirical evaluation, we note that the theoretical guarantees established in Propositions \ref{proposition1} and \ref{proposition2} hold for any generalized function $c$ that satisfies the stated boundary conditions (i.e., outputting values $>1$ for low margins and $<1$ for high margins) and necessary regularity assumptions.

Our theoretical results establish that MADPO offers a more granular, per-pair control over the learned preference margin compared to the global mechanisms of IPO and $\beta$-DPO. Propositions \ref{proposition1} and \ref{proposition2} formalize this: for each preference pair, the optimal policy learns to match a dynamically scaled target, $c(|h_{\phi^*}|)h_{\phi^*}$, and this target margin can be monotonically controlled by the coefficient $c$. This provides a sharp, per-example comparative-statics guarantee.

In contrast, IPO regularizes globally (a uniform mechanism that does not adapt to each pair’s margin), and $\beta$-DPO adapts a batch-level temperature $\beta$ shared across all samples in a batch. Neither provides the sample-specific, per-pair control formalized by Propositions \ref{proposition1} and \ref{proposition2}.

\subsection{Lipschitz Continuity and Robustness to Reward Estimation Error}
Having analyzed our method in an ideal oracle setting, we now establish its robustness to the reward estimation errors that occur in practice. To do so, we prove that the MADPO loss function is Lipschitz continuous with respect to the reward model parameters, culminating in a formal performance guarantee that bounds the impact of these errors.

Our analysis in this section follows the theoretical framework established by \cite{chowdhury2024provably}. We adopt the following assumptions from their work to analyze the stability of our method.

\begin{assumption}\label{assump:identifiability}
We assume the following constraints on the reward model's parameter space, $\Phi$:
\begin{itemize}

    \item The parameter space is defined as $\Phi = \{\phi \in \mathbb{R}^{\delta} \ | \ \sum_{i=1}^\delta \phi_i = 0\}$.
    \item There exists a constant $B > 0$ such that for any parameter vector $\phi \in \Phi$, its Euclidean norm is bounded: $\|\phi\| \le B$.
\end{itemize}
\end{assumption}
This assumption places two standard constraints on the reward model's parameter space. The first, the zero-mean condition ($\sum \phi_i = 0$), is necessary for identification. Because the preference probability in the BTL model depends only on the difference in rewards, $r_\phi(x, y_w) - r_\phi(x, y_l)$, the underlying reward function is only unique up to an arbitrary constant shift. This constraint resolves the inherent shift ambiguity, ensuring the identification of a reward function. The second condition, boundedness ($\|\phi\| \le B$), is a standard regularity assumption required in most theoretical analyses to ensure the parameter space is well-behaved, forming a necessary prerequisite for the performance guarantees that follow.

\begin{assumption} \label{assump:smoothness}
The reward function $r_\phi(x, y)$ is assumed to be well-behaved with respect to its parameters. Specifically, there exist constants $\alpha_0, \alpha_1, \alpha_2 > 0$ such that for any $\phi \in \Phi$ and any sample $(x,y)$, the function, its gradient, and its Hessian are uniformly bounded:
\[|r_\phi(x, y)| \le \alpha_0,~\|\nabla_{\phi} r_\phi(x, y)\| \le \alpha_1,~\nabla_{\phi}^2 r_\phi(x, y) \preceq \alpha_2 I.\]
\end{assumption}
This assumption imposes standard smoothness and boundedness conditions on the reward function, $r_\phi$. These conditions are crucial as they ensure that the explicit reward margin function, $h_\phi(x, y_w, y_l) = r_\phi(x, y_w) - r_\phi(x, y_l)$, is also well-behaved. Specifically, they imply that $h_\phi$ is bounded and Lipschitz continuous with respect to its parameters $\phi$, and that its gradient is also Lipschitz continuous. Such regularity assumptions are a common prerequisite for establishing performance guarantees in the analysis of policy optimization algorithms and are consistent with the theoretical frameworks used in related work \citep{agarwal2021theory, chowdhury2024provably}.

\begin{assumption} \label{assump:bounded_policy_term}
There exists a constant $L_\theta > 0$ such that for any policy parameters $\theta$ in the parameter space $\Theta$ and for any sample $(x, y_w, y_l)$, the absolute value of the log-likelihood term is bounded:
\begin{equation}
    \left| \log\sigma(\beta h_\theta(x, y_w, y_l)) \right| \le L_\theta. \nonumber
\end{equation}
\end{assumption}

This is a standard technical assumption that is well-justified. It is a mild condition, as it is fundamentally a constraint that the policy, $\pi_\theta$, cannot assign a probability of exactly 0 or 1 to any response. Furthermore, this condition is not arbitrary; it can be derived by imposing boundedness constraints on the policy function, $ \pi_\theta(y|x)$, that are directly analogous to those we placed on the reward model in Assumption \ref{assump:smoothness}. This approach is consistent with similar assumptions made in prior theoretical analyses of preference-based learning, including the framework established by \cite{chowdhury2024provably}.

The following theorem provides a high-probability bound that connects the estimation error of the reward model to the stability of our final loss function. This error is measured in a data-dependent semi-norm, $\|\cdot\|_{\hat{\Sigma}_{\phi}}$, which is induced by the empirical covariance of the reward gradients.

Let the gradient vectors $\mathbf{z}_i$ be defined with respect to the reward model parameters $\phi$:
\begin{equation}
    \mathbf{z}_i = \nabla_\phi h_{\phi}(x_i, y_{w,i}, y_{l,i}). \nonumber
\end{equation}
Then, the empirical covariance matrix $\hat{\Sigma}_{\phi}$ is given by:
\begin{equation}
    \hat{\Sigma}_{\phi} = \frac{1}{N} \sum_{i=1}^N \mathbf{z}_i \mathbf{z}_i^\top. \nonumber
\end{equation}

\begin{theorem}\label{thm:stability}
Under Assumptions \ref{assump:identifiability}, \ref{assump:smoothness} and \ref{assump:bounded_policy_term}, let $\rho \in (0, 1]$ and $\kappa > 0$. Then, with a probability of at least $1 - \rho$, the following bound holds for the MADPO loss function, for all $(x,y_w,y_l) \in \mathcal{D}$
\begin{align}
    \left| \mathcal{L}(\theta, \phi^*;x,y_w,y_l) - \mathcal{L}(\theta, \hat{\phi};x,y_w,y_l) \right| 
    &\le L \cdot \|\hat{\phi} - \phi^* \|_{\hat{\Sigma}_{\phi^*} + \kappa I} \nonumber \\
    &\le \frac{L\cdot C}{\gamma} \sqrt{\frac{\delta + \log(1/\rho)}{N}} + C'\cdot B \sqrt{\kappa + \frac{\alpha_2}{\gamma} + \alpha_1 \alpha_2 B}, \nonumber
\end{align}
where $L$ is the Lipschitz constant, $C$ and $C'$ are absolute constants, and $\gamma$ is a constant dependent on the bound of the reward function:
$$
\gamma = \frac{1}{2 + e^{-4\alpha_0} + e^{4\alpha_0}}.
$$
\end{theorem}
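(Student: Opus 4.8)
The plan is to establish the first inequality---Lipschitz continuity of the MADPO loss in the reward parameters, measured in the semi-norm $\|\cdot\|_{\hat\Sigma_{\phi^*}+\kappa I}$---and then invoke the concentration result of \cite{chowdhury2024provably} to obtain the second inequality. I would begin by fixing an arbitrary sample $(x,y_w,y_l)\in\mathcal D$ and writing the loss difference as $-[w(h_{\phi^*})-w(h_{\hat\phi})]\log\sigma(\beta h_\theta)$. By Assumption~\ref{assump:bounded_policy_term} the log-likelihood factor is bounded in absolute value by $L_\theta$, so it suffices to control $|w(h_{\phi^*})-w(h_{\hat\phi})|$. This reduces the problem to showing that the map $\phi\mapsto w(h_\phi(x,y_w,y_l))$ is Lipschitz in $\phi$ with a constant that is \emph{uniform} over $\Phi$ and over the dataset.

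The core of the argument is a chain-rule decomposition: $\nabla_\phi w(h_\phi) = w'(h_\phi)\,\nabla_\phi h_\phi$. I would bound the two factors separately. For $\nabla_\phi h_\phi$, Assumption~\ref{assump:smoothness} gives $\|\nabla_\phi h_\phi\|\le 2\alpha_1$ (and $\mathbf z_i$ is exactly this gradient at $\phi^*$). For $|w'(h_\phi)|$, I would use the piecewise definition of $w$: on $h_\phi\le-\tau$ it is zero, and on $h_\phi>-\tau$ one differentiates $\sigma(c(|h_\phi|)h_\phi)/\sigma(h_\phi)$, which requires bounding the derivative of the inner argument $g(h):=c(|h|)h$ and of the sigmoid ratio. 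Here the key observations are that $c$ is a bounded, smooth logistic-type function with bounded derivative (its range is $[c_{\min},c_{\max}]$ and $\lambda$ controls its slope), that $|h_\phi|\le 2\alpha_0$ by Assumption~\ref{assump:smoothness}, and that $\sigma(h_\phi)$ is bounded below by $\sigma(-2\alpha_0)=1/(1+e^{2\alpha_0})$ on this compact range, so the ratio has no singularity. Combining these, $|w'|$ is bounded by an explicit constant depending only on $c_{\min},c_{\max},\lambda,\tau,\alpha_0$; multiplying by $2\alpha_1 L_\theta$ yields the Lipschitz constant $L$ for the loss as a function of $\phi$ in the Euclidean norm. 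To upgrade the Euclidean Lipschitz bound to the stated $\hat\Sigma_{\phi^*}+\kappa I$ semi-norm bound, I would follow the standard device: write the finite difference along the segment from $\hat\phi$ to $\phi^*$, express $w(h_{\phi^*})-w(h_{\hat\phi}) = \int_0^1 \langle \nabla_\phi w(h_{\phi_t}),\,\phi^*-\hat\phi\rangle\,dt$, and note $\nabla_\phi w = w'\nabla_\phi h$ is (up to the scalar $w'$) aligned with the directions spanning $\hat\Sigma$; a Cauchy--Schwarz step in the $\hat\Sigma+\kappa I$ inner product, together with $\kappa I$ absorbing the component of $\phi^*-\hat\phi$ not seen by $\hat\Sigma$, gives $|w(h_{\phi^*})-w(h_{\hat\phi})|\le L\|\hat\phi-\phi^*\|_{\hat\Sigma_{\phi^*}+\kappa I}$.

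For the second inequality I would cite directly the reward-estimation concentration bound from \cite{chowdhury2024provably} (their MLE analysis under Assumptions~\ref{assump:identifiability}--\ref{assump:smoothness}), which controls $\|\hat\phi-\phi^*\|_{\hat\Sigma_{\phi^*}+\kappa I}$ with probability at least $1-\rho$ by a term of order $\frac{C}{\gamma}\sqrt{(\delta+\log(1/\rho))/N}$ plus a bias term $C'B\sqrt{\kappa+\alpha_2/\gamma+\alpha_1\alpha_2 B}$, where $\gamma = (2+e^{-4\alpha_0}+e^{4\alpha_0})^{-1}$ is precisely the uniform lower bound on $\sigma'(\cdot)$ evaluated on the range of reward margins (since $|h_\phi|\le 2\alpha_0$ forces the BTL log-likelihood to be $\gamma$-strongly concave in the relevant direction). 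Multiplying this high-probability bound by $L$ gives the claimed chain of inequalities; taking a union bound is unnecessary since the concentration event is a single event that holds for all samples simultaneously (the per-sample Lipschitz constant $L$ is uniform).

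The main obstacle I anticipate is the careful verification that $|w'(h_\phi)|$ is uniformly bounded, including at the transition point $h_\phi=-\tau$ where the piecewise definition could in principle create a kink: one must check that the left and right derivatives are both finite (the left derivative is $0$, the right derivative is finite by the smoothness of $\sigma$ and $c$), so $w$ is Lipschitz even though it need not be $C^1$ there---this is enough for the finite-difference argument, which only needs absolute continuity with an essentially bounded derivative. A secondary technical point is tracking how the constants $c_{\min},c_{\max},\lambda,\tau$ enter $L$; I would state $L$ as an explicit but unoptimized function of these and of $\alpha_0,\alpha_1,L_\theta$, since the theorem statement only requires existence of such a constant.
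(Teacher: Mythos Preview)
Your proposal is correct and follows essentially the same approach as the paper: factor out $|\log\sigma(\beta h_\theta)|\le L_\theta$, bound $|w(h_{\phi^*})-w(h_{\hat\phi})|$ via a bound on $|w'|$, combine with the Lipschitz dependence of $h_\phi$ on $\phi$ through a mean-value/Cauchy--Schwarz step in the $\hat\Sigma_{\phi^*}+\kappa I$ semi-norm, and then invoke the concentration result of \cite{chowdhury2024provably} for the second inequality. The only noteworthy difference is organizational: the paper isolates the boundedness of $|w'(h)|$ as a standalone lemma proved globally on all of $\mathbb{R}$ (analyzing the tails $h\to\pm\infty$), whereas you exploit $|h_\phi|\le 2\alpha_0$ from Assumption~\ref{assump:smoothness} to work on a compact interval---a legitimate and slightly more economical shortcut that yields the same conclusion.
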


This result certifies the plug-in stability of our practical, two-stage MADPO pipeline. It guarantees that the training objective remains stable even when using an estimated reward model, $\hat{\phi}$, instead of the optimal, $\phi^*$. Concretely, Theorem \ref{thm:stability} shows that for any given preference pair, the gap between the oracle loss $\mathcal{L}(\theta,\phi^*;x,y_w,y_l)$ and the plug-in loss $\mathcal{L}(\theta,\hat\phi;x,y_w,y_l)$ is controlled linearly by the reward-parameter error. This guarantee is per-sample and uniform over the dataset, which is a stronger claim than a statement about the average-case error. This means every mini-batch, curriculum subset, or the full empirical objective inherits the same deviation control.

The bound is also operational, as it reveals the key levers that ensure MADPO is robust in practice. The two terms in the bound expose what makes the plug-in procedure reliable:
\begin{itemize}
    \item \textbf{Data Quality and Quantity:} The first term decays at the familiar $O(\sqrt{(\delta+\log(1/\rho))/N})$ rate with more data. It also improves with more informative data that leads to a well-conditioned covariance matrix $\hat\Sigma_{\phi}$.
    \item \textbf{Regularization and Boundedness:} The second term shows that gentle regularization $\kappa$ stabilizes learning in directions where data is sparse. Furthermore, the constant $\gamma$, derived from the bounded-reward assumption, prevents the logistic loss from saturating, ensuring that small errors in $\hat{\phi}$ do not cause disproportionately large swings in the objective.
\end{itemize}

It is important to note that the second term in this bound is non-vacuous in real-world settings. When the training dataset is small or the reward parameter space is sparsely sampled, this term represents the active prevention of optimization instability. By incorporating the gentle regularization $\kappa$, MADPO ensures that the learning process remains grounded even in under-sampled directions of the parameter space, making the mechanism practically relevant even for well-behaved data.

Finally, this theorem complements our earlier results. Propositions \ref{proposition1} and \ref{proposition2} characterize what MADPO learns at the pair level (amplifying low margins, shrinking high ones); Theorem \ref{thm:stability} guarantees how reliably that mechanism survives the reality of an estimated reward model. In short, the pairwise control that defines MADPO is not brittle. Under the stated regularity conditions, the two-stage procedure is uniformly robust to reward estimation error, making the method dependable for the training regimes used in practice.

\subsection{Smoothness \& Curvature: MADPO vs. DPO}
In this subsection, we compare the smoothness and curvature of the MADPO loss to the vanilla DPO loss. We show that the gradient and Hessian of our objective are simply a re-scaled version of the DPO derivatives, which confirms that our method has a stable and well-behaved optimization landscape.

\begin{proposition}\label{prop:bounded_derivatives}
Let $\mathcal{L}(\theta, \phi; x, y_w, y_l)$ be the MADPO loss function with bounded, $\theta$ independent weight $0<w(h_\phi)\le w_{\text{max}}$. Then, for any sample $(x, y_w, y_l) \in \mathcal{D}$, the first and second derivatives of the loss with respect to the implicit reward margin, $h_\theta$, satisfy the following bounds:
\begin{enumerate}
    \item \textbf{Bounded Gradient:} 
    $$
    \left| \frac{\partial \mathcal{L}}{\partial h_{\theta}} \right| \le w_{\text{max}}\beta.
    $$
    \item \textbf{Bounded Hessian:}
    $$
    \left| \frac{\partial^2 \mathcal{L}}{\partial h_{\theta}^2} \right| \le \frac{w_{\text{max}}\beta^2}{4}.
    $$
\end{enumerate}
\end{proposition}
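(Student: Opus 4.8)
The plan is to exploit the hypothesis that the weight $w(h_\phi)$ is $\theta$-independent, so that every derivative with respect to the implicit margin $h_\theta$ acts only on the log-sigmoid factor. Writing $g(u) := -\log\sigma(\beta u)$, the loss factors as $\mathcal{L}(\theta,\phi;x,y_w,y_l) = w(h_\phi)\,g(h_\theta)$, and both claimed inequalities reduce to bounding $|g'(h_\theta)|$ and $|g''(h_\theta)|$ uniformly in $h_\theta$ and then multiplying through by the constant bound $w(h_\phi)\le w_{\text{max}}$.

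For the gradient bound I would first compute $g'(u) = -\beta\bigl(1-\sigma(\beta u)\bigr)$, using the standard identity $\frac{d}{dz}\log\sigma(z) = 1-\sigma(z)$. Since $1-\sigma(\beta u)\in(0,1)$ for every real argument, this gives $|g'(u)|\le\beta$, hence $\bigl|\partial\mathcal{L}/\partial h_\theta\bigr| = w(h_\phi)\,|g'(h_\theta)| \le w_{\text{max}}\beta$. For the Hessian bound I would differentiate once more to get $g''(u) = \beta^2\,\sigma(\beta u)\bigl(1-\sigma(\beta u)\bigr)$, using $\sigma' = \sigma(1-\sigma)$; the elementary inequality $\sigma(z)(1-\sigma(z))\le\tfrac14$ (with equality at $z=0$) then yields $|g''(u)|\le\beta^2/4$, so $\bigl|\partial^2\mathcal{L}/\partial h_\theta^2\bigr| = w(h_\phi)\,|g''(h_\theta)| \le w_{\text{max}}\beta^2/4$.

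There is no genuine obstacle in this argument; it is a direct computation once the separation of $w$ from the $h_\theta$-dependence is made explicit. The only point warranting a remark is that the proposition's stated hypothesis already supplies the $\theta$-independence and the uniform bound $w_{\text{max}}$ on the weight — for the specific piecewise weight of Eq.~\ref{eq:weight} evaluated at the fixed estimate $\hat\phi$, one would verify $w_{\text{max}} = \max\{1,\ \sup_{h\in\mathbb{R}}\sigma(c(|h|)h)/\sigma(h)\}<\infty$ separately, but that lies outside this statement. I would close by noting, consistent with the surrounding text, that these are precisely $w_{\text{max}}$ times the corresponding gradient and Hessian bounds for vanilla DPO, so MADPO inherits DPO's well-behaved optimization landscape up to a controllable rescaling.
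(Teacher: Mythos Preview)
Your proposal is correct and follows essentially the same route as the paper's proof: both factor out the $\theta$-independent weight $w(h_\phi)$, compute the first and second derivatives of $-\log\sigma(\beta h_\theta)$ directly, and bound them using $\sigma(-z)=1-\sigma(z)\in(0,1)$ and $\sigma(z)(1-\sigma(z))\le\tfrac14$. The only difference is cosmetic---the paper writes the derivative as $-w(h_\phi)\beta\sigma(-\beta h_\theta)$ where you write $-w(h_\phi)\beta(1-\sigma(\beta h_\theta))$---and your closing remark about the $w_{\text{max}}$-rescaling of the vanilla DPO bounds mirrors the surrounding discussion in the paper.
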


This proposition reveals that MADPO is a principled modification of the vanilla DPO framework. It shows that the scalar gradient and Hessian of the MADPO loss are simply the vanilla DPO derivatives multiplied by our bounded weight, $w(h_\phi)$. This direct scaling is crucial because it ensures MADPO preserves the benign optimization geometry of the original DPO objective, which has intrinsically capped sensitivity and curvature. For practitioners, this translates to predictable gradients that are compatible with standard control techniques like learning rate tuning or gradient clipping. Crucially, because these derivatives are bounded on a per-sample basis, it guarantees that the instance-level amplification and regularization effects established in our prior propositions are applied in a stable and reliable manner.

\section{Experiment}
In this section, we present the empirical evaluation of our proposed method, MADPO. We first outline our experimental setup on a real-world summarization task, designed to evaluate our algorithm against strong baselines using actual human preference data. We then present our main comparative results, isolate our core components through an ablation study, and empirically validate our theoretical claims regarding gradient stability. To maintain focus on the essence of these primary findings, comprehensive details regarding the experimental setup and further analysis are provided in Appendix~\ref{app:experiment_details}.

\subsection{Experimental Setup on TL;DR summarization}

\paragraph{Dataset.}
We use the Reddit TL;DR summarization dataset \citep{stiennon2020learning}. To study the effect of data quality, we exclusively utilize pairs with explicitly provided confidence scores, discarding any instances with null values. This ensures that we accurately capture the true distribution of preference clarity, which is naturally and nearly uniformly distributed. For training, we utilize the entirety of the dataset excluding the \texttt{valid 1} split.

\paragraph{Models and Evaluation.}
We use \texttt{Qwen2.5-7B-Instruct} \citep{qwen2} as our base architecture, fine-tuning all models via LoRA \citep{hu2022lora}. Following the MADPO pipeline, we first train a reward model on the training set to extract explicit margins, followed by policy optimization using these margins for adaptive weighting. We evaluate the models on 1,000 held-out pairs from the \texttt{valid 1} split using an LLM-as-a-judge framework. To rigorously mitigate position bias, evaluations are run twice with swapped summary positions; a win is only recorded upon strict agreement, and our primary metric is the net win rate (win rate minus loss rate).

\paragraph{Baselines and Hyperparameters.}
We compare MADPO against vanilla DPO and other variants designed to handle mixed-quality preference pairs, namely IPO and $\beta$-DPO. For all methods, the base temperature $\beta$ is tuned over the discrete range $\{0.01, 0.05, 0.1\}$. To ensure a fair comparison without the prohibitive computational cost of a full Cartesian product search on a 7B parameter model, we conduct targeted, sequential hyperparameter optimizations for both $\beta$-DPO and MADPO. All hyperparameter tuning is conducted on a separate validation subset to prevent data leakage into our final evaluation set.

\subsection{Experimental Results}

\paragraph{Main Results.}

\begin{figure}[htpb]
    \centering
    \includegraphics[width=0.6\linewidth]{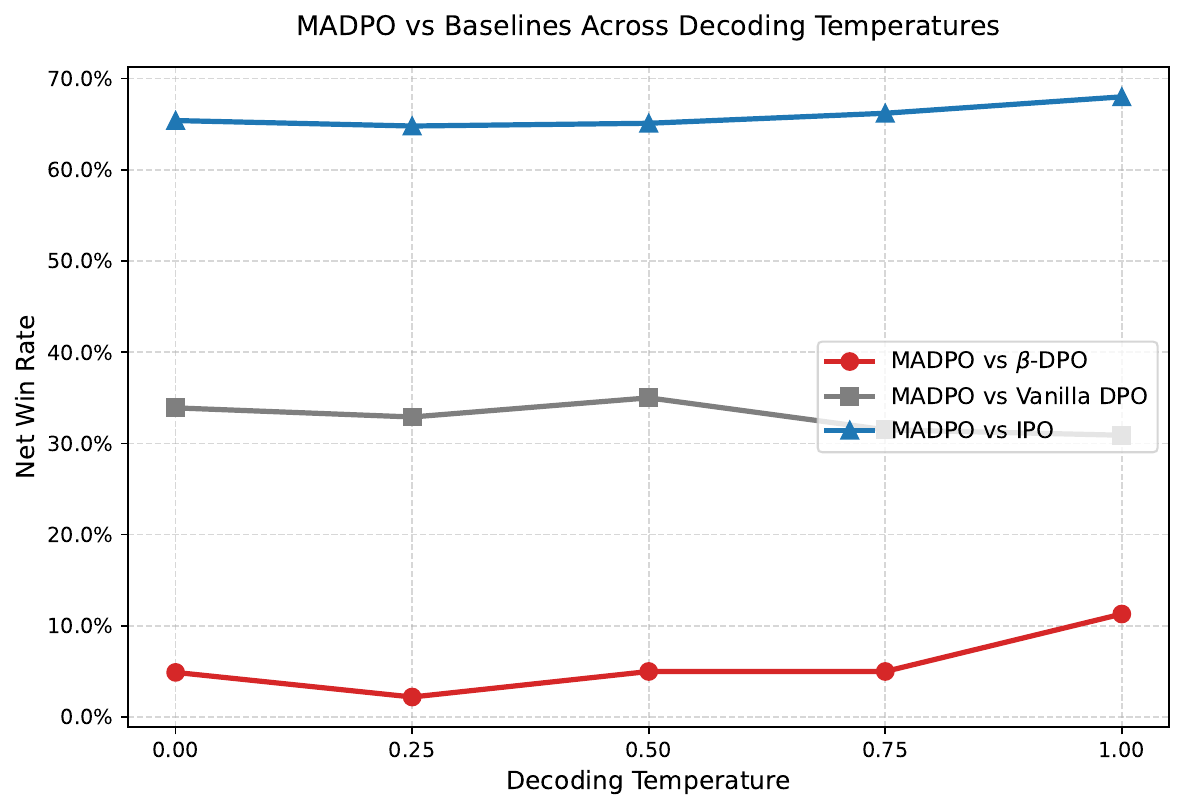}
    \caption{Net win rates of MADPO against baseline models (IPO, vanilla DPO, and $\beta$-DPO) across varying decoding temperatures. MADPO demonstrates superior robustness and consistent performance gains against all competitors.}
    \label{fig:main_results}
\end{figure}
The main experimental results, visualized in Figure~\ref{fig:main_results}, demonstrate that MADPO consistently outperforms all baseline methods on the TL;DR summarization task across a comprehensive range of sampling temperatures. We report the net win rate (win rate minus loss rate) as judged by the LLM oracle to establish comparative robustness. 

MADPO achieves a commanding lead over IPO, maintaining net win rates between +64.8\% and +68.0\%. This severe underperformance by IPO suggests that its uniform regularization is overly rigid and poorly suited for the summarization task. Against vanilla DPO, MADPO maintains a highly stable and significant advantage, with net win rates holding between +30.9\% and +35.0\%. This confirms the core premise of our work: a single, static $\beta$ penalty is fundamentally insufficient for handling datasets with mixed-quality preference pairs. 

The most competitive baseline is $\beta$-DPO, yet MADPO still secures a consistent victory across the entire temperature sweep (net win rates from +2.2\% to +11.3\%). Notably, at the highest sampling temperature, MADPO's advantage over $\beta$-DPO widens to its peak of +11.3\%. High generation temperatures often risk inducing hallucinations or unfaithful outputs \citep{li2025exploring, spracklen2025we}. MADPO's strong performance in this high-entropy regime suggests that its fine-grained, instance-level regularization provides more reliable guardrails against generation degradation than the batch-level approximations used by $\beta$-DPO.

\paragraph{Ablation Study.}

\begin{figure}[htpb]
    \centering
    \begin{minipage}{0.48\textwidth}
        \centering
        \includegraphics[width=\linewidth]{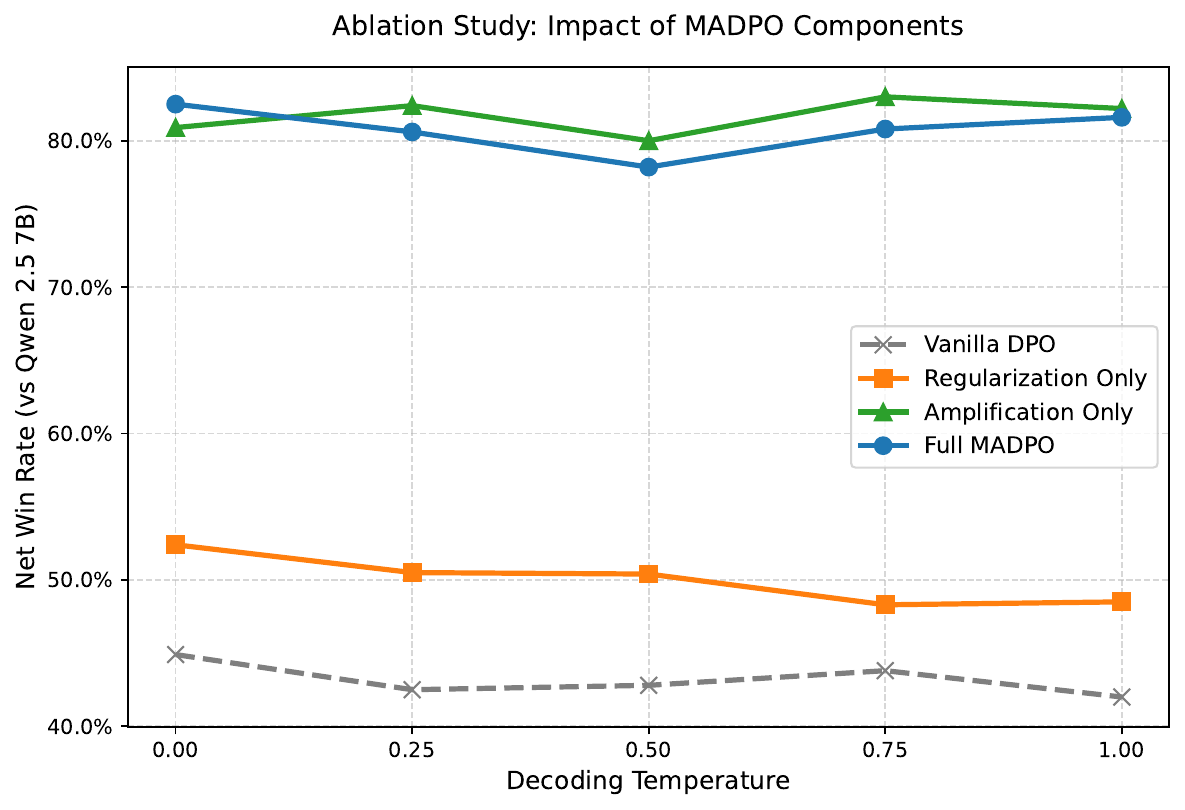}
        \vspace{0.1cm}
        \textbf{(a) Components vs. Base Model}
    \end{minipage}
    \hfill 
    \begin{minipage}{0.48\textwidth}
        \centering
        \includegraphics[width=\linewidth]{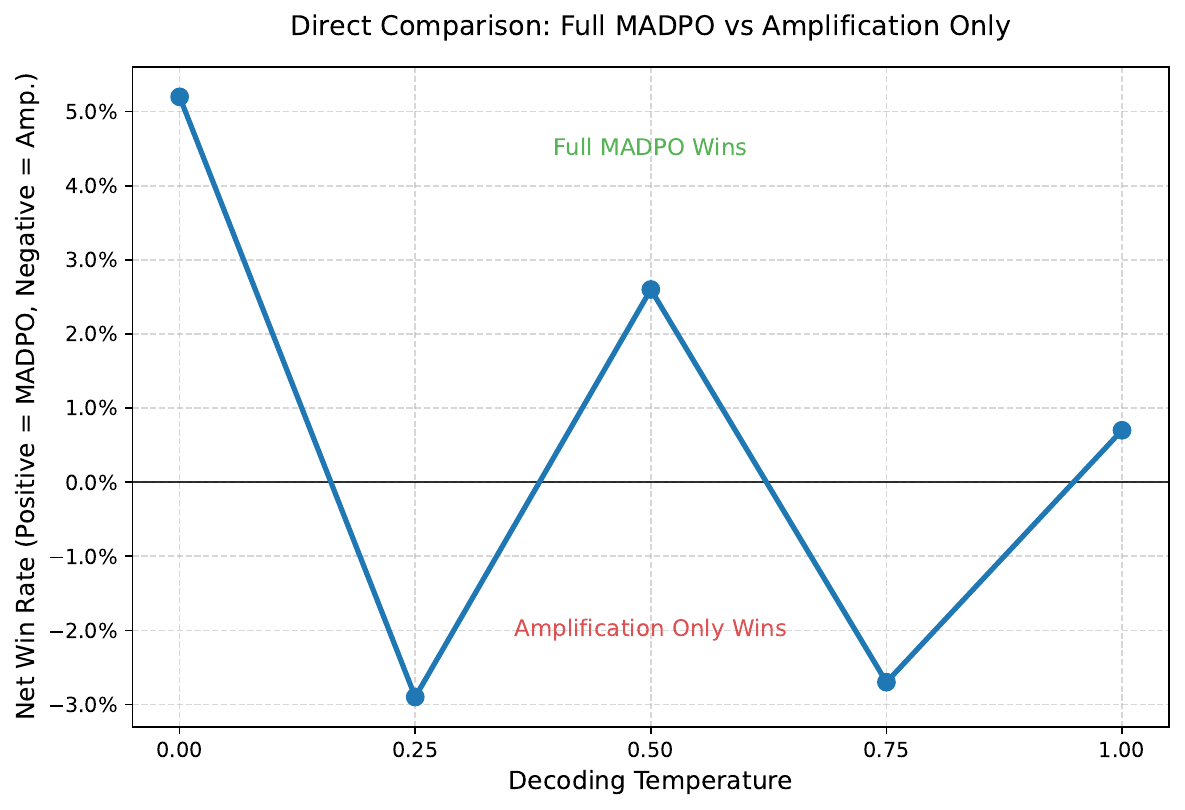}
        \vspace{0.1cm}
        \textbf{(b) Full MADPO vs. Amplification-Only}
    \end{minipage}
    \caption{Ablation study evaluating the impact of MADPO's core components. \textbf{(a)} Net win rates of the ablated models and Vanilla DPO evaluated against the unaligned Qwen 2.5 7B base model across decoding temperatures. \textbf{(b)} Head-to-head net win rate of Full MADPO evaluated directly against the Amplification-Only model, highlighting Full MADPO's significant advantage at the deterministic $T=0$ setting.}
    \label{fig:ablation_studies}
\end{figure}
The results of our ablation study, presented in Figure~\ref{fig:ablation_studies}, confirm that the amplification mechanism is the primary driver of MADPO's massive performance gains, while the explicit regularization component remains crucial for optimizing peak deterministic performance. 

To isolate the contributions of our method's two core components, we calculate the net win rates of two ablated models directly against the unaligned Qwen 2.5 7B base model (Figure~\ref{fig:ablation_studies}a). The first is an Amplification-Only version, where the weight is set to one for high-margin pairs ($|h_{\hat{\phi}}| \ge \tau$) to disable the regularization component. This model achieves a massive performance leap, securing net win rates of 80.0\% to 83.0\% against the base model, compared to Vanilla DPO's 42.0\% to 44.9\%. This unequivocally demonstrates that aggressively learning from informative, low-margin pairs is the most critical factor for success in real world, noisy datasets.

The second model is a Regularization-Only version, where the weight is set to one for low-margin pairs ($|h_{\hat{\phi}}| < \tau$) to disable amplification. While its impact is secondary, explicitly dampening high-margin pairs still provides a consistent improvement (net win rates of 48.3\% to 52.4\%) over the Vanilla DPO baseline across all temperatures, confirming that the regularization component is a beneficial mechanism in its own right.

Crucially, Figure~\ref{fig:ablation_studies}b addresses whether the explicit regularization component is still necessary given the dominance of the amplification mechanism. In a direct head-to-head comparison across higher sampling temperatures ($\ge 0.25$), Full MADPO and the Amplification-Only model effectively trade marginal fluctuations, representing a statistical tie. However, with greedy decoding, which is the standard and most critical setting for deploying reliable summarization models \citep{li2025exploring}, Full MADPO secures a significant +5.2\% net win rate over the Amplification-Only model. This confirms that the Full MADPO framework is highly meaningful: the explicit regularization component acts as a necessary safeguard against overfitting on easy examples, ensuring optimal deterministic generation without sacrificing the substantial gains provided by amplification.

\paragraph{Empirical Gradient Stability.}

\begin{figure}[htpb]
    \centering
    \includegraphics[width=0.6\linewidth]{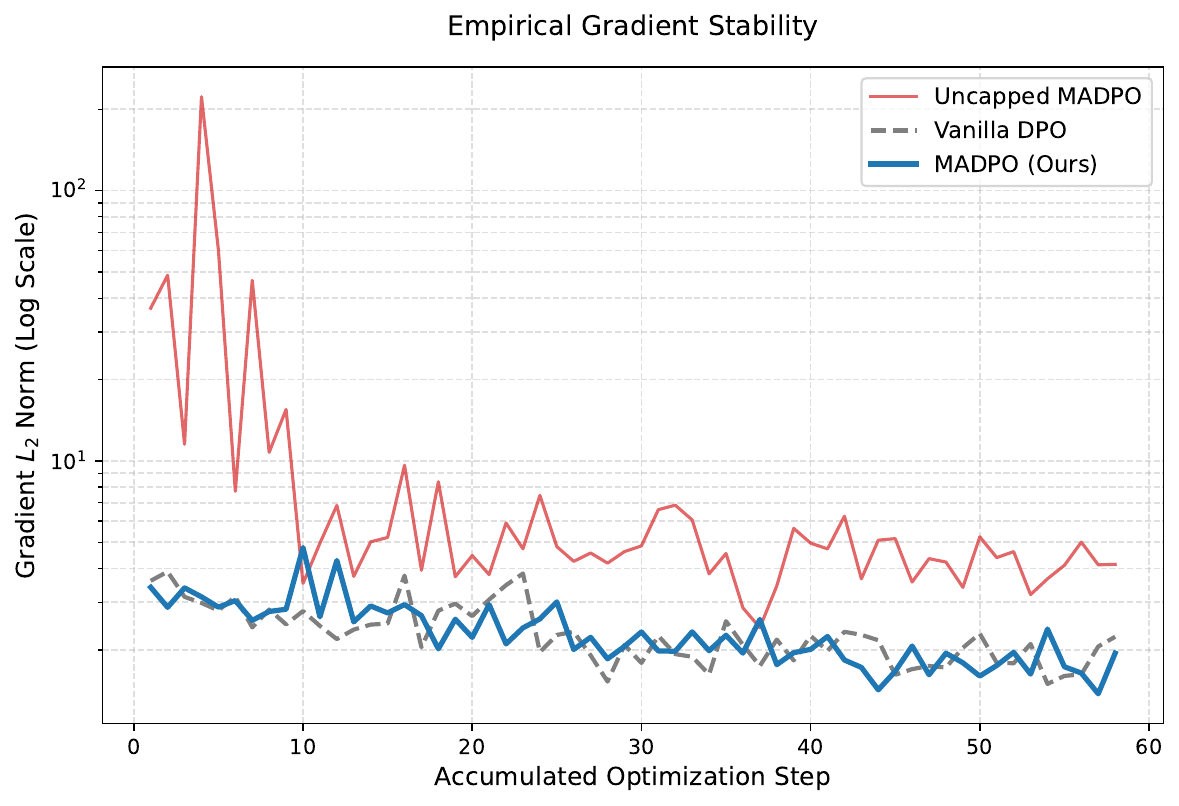}
    \caption{Empirical stress test of gradient stability on the Qwen 2.5 7B model without gradient clipping. The dataset contains 30\% adversarial (inverted) labels. Uncapped MADPO exhibits severe optimization instability, whereas our proposed MADPO (using the piecewise cap from Equation \ref{eq:weight}) maintains bounded, stable gradients perfectly aligned with Vanilla DPO, validating Proposition \ref{prop:bounded_derivatives}.}
    \label{fig:gradient_stability}
\end{figure}
Finally, we design a stress test to empirically validate the theoretical guarantees of Proposition~\ref{prop:bounded_derivatives}, which asserts that MADPO preserves a stable, bounded optimization landscape. To simulate a worst case adversarial scenario, we synthesize a corrupted dataset from the Reddit TL;DR corpus. We isolate 30\% of the training pairs that exhibit a large estimated reward margin ($h_\phi > \tau$) and deliberately invert their preference labels. This forces these pairs into the extreme negative margin regime ($h_\phi < -\tau$). The remaining 70\% of the data is left in its normal, correct state. 

It is important to note the objective of this experimental design. The 30\% label inversion is not intended to model realistic human annotation noise; rather, it is strictly an extreme worst-case scenario designed to empirically verify the mathematical stability guarantees derived in Proposition~\ref{prop:bounded_derivatives}. By intentionally pushing the optimization landscape to an adversarial extreme, we isolate and prove MADPO's structural resilience to gradient explosion.

Using the fixed parameters at $c_{\max} = 4$ and $\tau = 7$, we track the gradient $L_2$ norm during training. Crucially, we disable all gradient clipping mechanisms to observe the raw optimization dynamics. We compare three setups: Vanilla DPO, our proposed MADPO (which applies the piecewise cap defined in Equation~\ref{eq:weight}), and an Uncapped MADPO variant where the protective piecewise boundary is removed.

The results, shown in Figure~\ref{fig:gradient_stability}, starkly illustrate the necessity and effectiveness of our design. Without the piecewise constraint, Uncapped MADPO suffers from highly unstable optimization; its gradient norm spikes by orders of magnitude as it exponentially amplifies the adversarial, negative-margin errors. In stark contrast, the Full MADPO algorithm completely mitigates this instability. By seamlessly engaging the cap from Equation~\ref{eq:weight}, MADPO bounds these errant gradients, resulting in a stable optimization trajectory that closely mirrors Vanilla DPO. This confirms that MADPO achieves its aggressive performance gains without sacrificing training reliability, even when confronted with severe data noise.

\section{Conclusion}
In this work, we addressed a key limitation in the vanilla Direct Preference Optimization (DPO) framework: its reliance on a single, fixed temperature parameter that struggles with preference data of varying quality. We introduced Margin-Adaptive Direct Preference Optimization (MADPO), a method that applies an instance-level, adaptive weight to the DPO loss. Our theoretical analysis proved that MADPO is a principled modification that maintains a stable optimization landscape and is robust to the errors inherent in practical two-step training pipelines. Our empirical results on a TL;DR summarization task confirmed these findings, demonstrating that MADPO consistently outperforms strong baselines across various sampling temperatures. Our analysis reveals this success stems from effectively amplifying informative, low-margin examples while safely regularizing saturated pairs.

We acknowledge two primary limitations in the current study. First, although we scaled our experiments to a 7B-parameter model using LoRA, all results are still obtained on a single model family and a single summarization task. We therefore cannot claim that the observed gains generalize to other large language models or to substantially different domains. Second, while we used data with real human-annotated preference data, the method has only been evaluated on one type of preference task. The behavior of MADPO on more diverse or noisier real-world preference distributions remains to be thoroughly investigated. 

Importantly, we do not claim that MADPO is universally superior to existing methods; our goal is to demonstrate a principled and stable approach to instance-level margin adaptation within the DPO framework. Broader validation across model scales, architectures, and tasks is left for future work.




\bibliography{main}
\bibliographystyle{tmlr}

\newpage
\appendix

\section{Proofs}

\begin{proof}[Proof of Proposition \ref{proposition1}]
For mathematical representation in this proof, we adopt the notation \((x, y, y', d) \in \mathcal{D}\), where \(y, y'\) are responses and \(d \in \{0, 1\}\) is a binary preference indicator such that \(d = 1\) indicates \(y \succ y'\) (i.e., \(y = y_w\), \(y' = y_l\)) and \(d = 0\) indicates \(y' \succ y\) (i.e., \(y' = y_w\), \(y = y_l\)). The introduction of \(d\) allows us to flexibly model preference directions using the BTL model, where \(E_{d \sim P_{\phi^*}}[d|x,y,y'] = \sigma(h_{\phi^*})\). We now analyze the expected loss for any triplet \((x, y, y', d) \in \mathcal{D}_{\text{low}}\). The analysis begins by taking the expectation of the sample-level loss over the optimal preference distribution $d \sim P_{\phi^*}$. This expectation simplifies to a new cross-entropy objective.

The derivation proceeds as follows:
\begin{align}
    \mathcal{L}(\pi_\theta, \phi^*;x,y,y')
    &= -\mathbb{E}_{d \sim P_{\phi^*}} \left[ d \cdot w(h_{\phi^*}) \log\sigma(\beta h_\theta) + (1-d) \cdot w(-h_{\phi^*}) \log\sigma(-\beta h_\theta)  |x,y,y' \right] \nonumber \\
    &= - \left[ \sigma(h_{\phi^*}) w(h_{\phi^*}) \log\sigma(\beta h_\theta) + \sigma(-h_{\phi^*}) w(-h_{\phi^*}) \log\sigma(-\beta h_\theta) \right]\nonumber \\
    &= - \left[ \sigma(c(|h_{\phi^*}|) \cdot h_{\phi^*}) \log\sigma(\beta h_\theta) + \sigma(-c(|h_{\phi^*}|) \cdot h_{\phi^*}) \log\sigma(-\beta h_\theta) \right].\nonumber
\end{align}
The second equality holds by BTL where $E_{d \sim P_{\phi^*}}[d|x,y,y'] = \sigma(h_{\phi^*})$. The last equality is satisfied by the construction of $w(h)$. This final form is the cross-entropy loss between the policy's distribution, $P_\theta$, and a margin-aware target distribution, $P'_{\phi^*}$, with logits defined by $c(|h_{\phi^*}|) \cdot h_{\phi^*}$. Since the total loss, $\mathcal{L}(\theta, \phi^*)$, is the expectation of these non-negative, instance-level cross-entropy losses over the dataset, the global minimum is achieved when the loss for each instance is minimized simultaneously. This occurs if and only if the distributions are identical for each instance, which requires their logits to be equal. Therefore, the optimal solution satisfies:
\begin{equation}
    \beta h_{\theta^*} = c(|h_{\phi^*}|) \cdot h_{\phi^*}. \nonumber
\end{equation}
\end{proof}

\begin{proof}[Proof of Proposition \ref{proposition2}]
As we did for the proof of Proposition \ref{proposition1}, we adopt the notation \((x, y, y', d) \in \mathcal{D}\). Without loss of generality, we assume \(h_{\phi^*}\equiv h_{\phi^*}(x, y, y') = r_{\phi^*}(x, y) - r_{\phi^*}(x, y') > 0\) by swapping \(y\) and \(y'\) if necessary. So the weighting function is
\[ w(h_{\phi^*}) = \frac{\sigma(c(h_{\phi^*}) \cdot h_{\phi^*})}{\sigma(h_{\phi^*})} \quad \text{and} \quad w(-h_{\phi^*}) = 1 \]
We derive the expected loss for a triplet \((x, y, y', d) \in \mathcal{D}_{\text{high}}\) with \(E_{d \sim P_{\phi^*}}[d|x,y,y'] = \sigma(h_{\phi^*})\) from BTL model:
\begin{align}
\mathcal{L}(\pi_\theta, \phi^*;x,y,y') = - \left[ \sigma(c(|h_{\phi^*}|) \cdot h_{\phi^*}) \log\sigma(\beta h_\theta) + \sigma(-h_{\phi^*}) \log\sigma(-\beta h_\theta) \right]. \label{eq:cross_entropy}
\end{align}
Now, we take derivatives of Eq.~\eqref{eq:cross_entropy} with respect to $\beta h_{\theta}$, which results in
$$\frac{\partial\mathcal{L}(\pi_\theta, \phi^*;x,y,y')}{\partial \beta h_{\theta}} = - \left[ \sigma(c(|h_{\phi^*}|) \cdot h_{\phi^*}) \sigma(-\beta h_{\theta}) - \sigma(-h_{\phi^*}) \sigma(\beta h_{\theta}) \right].$$
At the optimum \(\theta^*\), we have
\[
F(c, \beta h_{\theta^*})\equiv \frac{\partial\mathcal{L}(\pi_\theta, \phi^*;x,y,y')}{\partial \beta h_{\theta}} \bigg|_{\theta = \theta^*} = - \left[ \sigma(c(|h_{\phi^*}|) \cdot h_{\phi^*}) \sigma(-\beta h_{\theta^*}) - \sigma(-h_{\phi^*}) \sigma(\beta h_{\theta^*}) \right] = 0.
\]
We denote $c \equiv c(|h_{\phi^*}|)$ for notation simplicity. By the implicit function theorem, \(\frac{\partial \beta h_{\theta^*}}{\partial c} = - \frac{\partial F / \partial c}{\partial F / \partial \beta h_{\theta^*}}\). Compute the partial derivatives:
\begin{align}
\frac{\partial F}{\partial c} &= \frac{\partial}{\partial c} \left[ \sigma(c \cdot h_{\phi^*}) \sigma(-\beta h_{\theta^*}) - \sigma(-h_{\phi^*}) \sigma(\beta h_{\theta^*}) \right] \nonumber \\
&= h_{\phi^*} \sigma(c \cdot h_{\phi^*}) \sigma(-c \cdot h_{\phi^*}) \sigma(-\beta h_{\theta^*}). \nonumber \\
\frac{\partial F}{\partial \beta h_{\theta^*}} &= \sigma(c \cdot h_{\phi^*}) \cdot [-\sigma(\beta h_{\theta^*})\sigma(-\beta h_{\theta^*})] - \sigma(-h_{\phi^*}) \cdot [\sigma(\beta h_{\theta^*})\sigma(-\beta h_{\theta^*})] \nonumber \\
&= -\sigma(\beta h_{\theta^*})\sigma(-\beta h_{\theta^*}) \left[ \sigma(c \cdot h_{\phi^*}) + \sigma(-h_{\phi^*}) \right]. \nonumber
\end{align}
Thus:
\[
\frac{\partial \beta h_{\theta^*}}{\partial c} = - \frac{h_{\phi^*} \sigma(c \cdot h_{\phi^*}) \sigma(-c \cdot h_{\phi^*}) \sigma(-\beta h_{\theta^*})}{-\sigma(\beta h_{\theta^*})\sigma(-\beta h_{\theta^*}) \left[ \sigma(c \cdot h_{\phi^*}) + \sigma(-h_{\phi^*}) \right]} = \frac{h_{\phi^*} \sigma(c \cdot h_{\phi^*}) \sigma(-c \cdot h_{\phi^*}) \sigma(-\beta h_{\theta^*})}{\sigma(\beta h_{\theta^*})\sigma(-\beta h_{\theta^*}) \left[ \sigma(c \cdot h_{\phi^*}) + \sigma(-h_{\phi^*}) \right]}.
\]
Since \(h_{\phi^*} > 0\), and all sigmoid terms are strictly positive, the numerator and denominator are positive, so \(\frac{\partial \beta h_{\theta^*}}{\partial c} > 0\). Thus, \(\frac{\partial (\beta h_{\theta^*})}{\partial c} > 0\), proving the proposition.
\end{proof}

\begin{lemma}\label{lem:bounded_w_derivative}
Let $w(h)$ be the weight function defined in Eq.~\ref{eq:weight}. The function is differentiable almost everywhere, and its derivative, $w'(h)$, is uniformly bounded. That is, there exists a constant $L_w > 0$ such that for all $h \in \mathbb{R}$ where the derivative is defined:
\begin{equation}
    |w'(h)| \le L_w. \nonumber
\end{equation}
\end{lemma}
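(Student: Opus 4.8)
The plan is to reduce everything to the region $h > -\tau$. On $(-\infty,-\tau]$ the weight is constant, $w\equiv 1$, so $w'\equiv 0$ there; and the only point where $w$ can fail to be differentiable is the junction $h=-\tau$, since on $(-\tau,\infty)$ the map $w$ is a composition of smooth functions (the apparent kink of $|h|$ at $h=0$ turns out to be harmless, as explained below). On $h>-\tau$ I would write $w(h)=\exp(g(h))$ with $g(h)=\log\sigma(c(|h|)h)-\log\sigma(h)$, so that $w'(h)=w(h)\,g'(h)$. It then suffices to produce (i) a uniform upper bound $w(h)\le w_{\max}$ and (ii) a uniform bound $|g'(h)|\le L_g$; the lemma follows with $L_w=w_{\max}L_g$, valid at every $h\neq-\tau$, which is enough for differentiability almost everywhere.

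For (i), I would rewrite $w(h)=(1+e^{-h})/(1+e^{-c(|h|)h})$ and split at $h=0$: for $h\ge 0$ the numerator is at most $2$ and the denominator at least $1$ (since $c(|h|)h\ge 0$), giving $w\le 2$; for $-\tau<h<0$ the numerator is at most $1+e^{\tau}$ and the denominator still at least $1$, so $w\le 1+e^{\tau}$. For (ii), the second term of $g'$ is immediate: $\tfrac{d}{dh}\log\sigma(h)=\sigma(-h)\in(0,1)$. For the first term, the chain rule gives $\tfrac{d}{dh}\log\sigma(c(|h|)h)=\sigma(-c(|h|)h)\cdot\tfrac{d}{dh}\!\left[c(|h|)h\right]$, and since $\sigma(-c(|h|)h)\in(0,1)$ I only need a uniform bound on $\tfrac{d}{dh}[c(|h|)h]$. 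A short computation using $\mathrm{sgn}(h)\,h=|h|$ shows this derivative equals $c'(|h|)\,|h|+c(|h|)$ for all $h\neq 0$, an expression that extends continuously to $h=0$ because the $c'(|h|)|h|$ piece vanishes there — this is precisely why $w$ is in fact differentiable at $0$.

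So the crux is to bound $c'(s)\,s+c(s)$ uniformly for $s\ge 0$. The term $c(s)$ lies trivially in $[c_{\min},c_{\max}]$. For $c'$, substituting $t=\tfrac{c_{\max}-1}{1-c_{\min}}e^{\lambda(s-\tau)}>0$ yields $c'(s)=-(c_{\max}-c_{\min})\lambda\,\tfrac{t}{(1+t)^2}$, and $t/(1+t)^2\le 1/4$ gives $|c'(s)|\le(c_{\max}-c_{\min})\lambda/4$. The one genuinely nontrivial point — and the main obstacle, though a mild one — is controlling $c'(s)\,s$ as $s\to\infty$: here $t$ grows exponentially in $s$, so $\tfrac{ts}{(1+t)^2}\le s/t\to 0$, which shows $|c'(s)s|$ is a continuous function on $[0,\infty)$ with limit $0$ at infinity and hence bounded. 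Collecting constants, I obtain $|g'(h)|\le 1+\sup_{s\ge 0}|c'(s)s|+c_{\max}$, and therefore $L_w=(1+e^{\tau})\bigl(1+\sup_{s\ge 0}|c'(s)s|+c_{\max}\bigr)$; finally I would note that the bound holds at every $h\neq-\tau$, so $w$ is differentiable almost everywhere with $|w'|\le L_w$ wherever defined.
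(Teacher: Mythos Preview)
Your proof is correct and takes a cleaner route than the paper's. The paper proceeds by a three-case split: on $(0,\infty)$ it applies the quotient rule to $\sigma(k(h))/\sigma(h)$ and uses that $\sigma^2(h)>1/4$ there to keep the denominator away from zero; on $(-\tau,0)$ it falls back on the soft argument that a continuous function on a bounded interval with finite one-sided limits at the endpoints is bounded; and on $(-\infty,-\tau)$ it is trivial. Your logarithmic decomposition $w=e^g$, $w'=w\,g'$ sidesteps the denominator issue entirely, which is what lets you treat all of $(-\tau,\infty)$ uniformly rather than splitting at $h=0$. Both arguments ultimately hinge on the same fact---that $c'(s)s\to 0$ as $s\to\infty$ because the exponential decay in $c'$ dominates the linear growth---so the core analytic content is shared. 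Two incidental advantages of your approach: you obtain an explicit constant $L_w=(1+e^{\tau})\bigl(1+\sup_{s\ge 0}|c'(s)s|+c_{\max}\bigr)$, whereas the paper only asserts existence; and you correctly observe that $w$ is in fact differentiable at $h=0$ (since $\tfrac{d}{dh}[c(|h|)h]$ has matching one-sided limits $c(0)$ there), while the paper lists $h=0$ as a non-differentiable point---a harmless slip for an ``almost everywhere'' statement, but your treatment is the accurate one.
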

\begin{proof}
For the purpose of this analysis, we can simplify the expression by fixing the hyperparameters to representative values. Hyperparameters' specific value does not affect the following stability analysis, so for simplicity we let $c_{\min} = 0$, along with $c_{\max} = 2$ and $\lambda = 1$.

By construction, the weight function $w(h)$ is continuous everywhere and differentiable almost everywhere. The only non differentiable points occur at $h=0$ due to the absolute value in the coefficient function $c$ and at $h=-\tau$ due to the piecewise definition.

\begin{itemize}
    \item \textbf{Case 1: For $h \in (0, \infty)$.} \\
    Let $k(h) = c(h) \cdot h$. The weight function and its derivative are:
    $$
    w(h) = \frac{\sigma(k(h))}{\sigma(h)}, \quad w'(h) = \frac{\sigma'(k(h))k'(h)\sigma(h) - \sigma(k(h))\sigma'(h)}{\sigma^2(h)}
    $$
    The sigmoid function $\sigma(\cdot)$ and its derivative $\sigma'(\cdot)$ are universally bounded (by 1 and 1/4, respectively). For $h > 0$, the denominator $\sigma^2(h)$ is bounded away from zero. Therefore, to show that $w'(h)$ is bounded, we only need to show that $k'(h)$ is bounded.

    By the product rule, $k'(h) = c(h) + c'(h)h$. By definition, $c(h)$ is bounded between $[c_{\min}, c_{\max}]$. The term $c'(h)h$ involves the derivative of the coefficient function, which contains an exponential decay term that dominates the linear growth of $h$. As $h \to \infty$, $c'(h)h \to 0$. Since $k'(h)$ is continuous on $(0, \infty)$ and has finite limits at its boundaries ($h \to 0^+$ and $h \to \infty$), it is bounded on this interval. Thus, $w'(h)$ is also bounded for $h>0$.

    \item \textbf{Case 2: For $h \in (-\tau, 0)$.} \\
    On this bounded open interval, the derivative $w'(h)$ is a continuous function. As established in our analysis of the boundaries, the one-sided limits of $w'(h)$ as $h \to -\tau^+$ and as $h \to 0^-$ are both finite. A function that is continuous on a bounded open interval and has finite limits at its endpoints is necessarily bounded. Thus, $w'(h)$ is bounded on this interval.
    
    \item \textbf{Case 3: For $h < -\tau$.} \\
    In this region, $w(h)=1$. Therefore, its derivative is $w'(h) = 0$, which is trivially bounded.
\end{itemize}
Since the derivative $w'(h)$ is bounded on all three regions that cover its domain, we conclude that it is uniformly bounded.
\end{proof}

\begin{lemma}\label{lem:lipschitz_w}
The weight function $w$ defined in Eq.~\ref{eq:weight} is $L_w$-Lipschitz continuous. That is, for any $h, h' \in \mathbb{R}$, the following inequality holds:
\begin{equation}
    |w(h) - w(h')| \le L_w |h - h'|.
\end{equation}
\end{lemma}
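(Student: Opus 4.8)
The plan is to deduce Lipschitz continuity from Lemma \ref{lem:bounded_w_derivative} together with the global continuity of $w$ that was established in the course of its proof. Recall the two facts we are handed: $w$ is continuous on all of $\mathbb{R}$, and $w$ is differentiable everywhere except possibly at the two points $h = 0$ and $h = -\tau$, with the uniform bound $|w'(h)| \le L_w$ holding wherever the derivative exists. The strategy is a standard ``piecewise mean value theorem'' argument: on any interval that avoids the two exceptional points the bound is immediate from the mean value theorem, and a general interval is broken at the exceptional points and reassembled using continuity of $w$ at those points.

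Concretely, I would fix $h, h' \in \mathbb{R}$, assume without loss of generality that $h < h'$ (the equality case is trivial, the reverse case symmetric), and set $S = \{0, -\tau\} \cap (h, h')$, a set of at most two points. Ordering, write $h = t_0 < t_1 < \dots < t_k = h'$ with $t_1, \dots, t_{k-1}$ the elements of $S$, so $k \le 3$. On each closed subinterval $[t_{j-1}, t_j]$ the function $w$ is continuous and on the open subinterval it is differentiable with $|w'| \le L_w$, so the mean value theorem gives $|w(t_j) - w(t_{j-1})| \le L_w (t_j - t_{j-1})$. Using the telescoping identity $w(h') - w(h) = \sum_{j=1}^k (w(t_j) - w(t_{j-1}))$ (valid since $w$ is finite at each breakpoint) followed by the triangle inequality, I obtain
\begin{equation}
|w(h) - w(h')| \le \sum_{j=1}^{k} |w(t_j) - w(t_{j-1})| \le L_w \sum_{j=1}^{k} (t_j - t_{j-1}) = L_w (h' - h),
\end{equation}
which is exactly the claimed inequality, with the same constant $L_w$ as in Lemma \ref{lem:bounded_w_derivative}.

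There is no genuine obstacle here; the one point that needs care is the treatment of the non-differentiable points $h = 0$ and $h = -\tau$, and this is precisely why global continuity of $w$ (rather than mere almost-everywhere differentiability) is essential: continuity lets the values $w(t_j)$ at the breakpoints serve as shared endpoints, so the per-piece bounds combine with no loss. One could alternatively phrase the whole argument as ``continuous plus a.e. differentiable with bounded derivative implies absolutely continuous implies Lipschitz,'' but since there are only finitely many bad points the elementary splitting argument above is cleaner and fully self-contained.
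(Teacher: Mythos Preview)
Your proof is correct and takes a somewhat different route from the paper's. The paper argues that a bounded a.e.\ derivative implies absolute continuity, then invokes the Fundamental Theorem of Calculus for Lebesgue integrals to write $w(b)-w(a)=\int_a^b w'(t)\,dt$ and bound the integral directly. You instead stay fully elementary: you split $[h,h']$ at the (at most two) non-differentiable points, apply the classical mean value theorem on each closed subinterval, and telescope. Your approach sidesteps the measure-theoretic machinery and is in fact tighter, since the paper's implication ``bounded derivative a.e.\ $\Rightarrow$ absolutely continuous'' is false in general (the Cantor function is the standard counterexample) and only goes through here because the exceptional set is finite and $w$ is globally continuous --- exactly the structure your splitting argument exploits directly. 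Your closing aside about the alternative ``continuous plus a.e.\ differentiable with bounded derivative implies absolutely continuous'' route carries the same caveat, but since you explicitly decline to rely on it, your actual argument is clean and self-contained.
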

\begin{proof}
The proof relies on Lemma \ref{lem:bounded_w_derivative}, which establishes that the derivative of the weight function is bounded almost everywhere, i.e., $|w'(t)| \le L_w$. A function with a bounded derivative (a.e.) is absolutely continuous, which is the required condition to apply the Fundamental Theorem of Calculus for Lebesgue Integrals.

For any two points $a, b \in \mathbb{R}$ with $a < b$, the theorem states:
\begin{equation}
    w(b) - w(a) = \int_a^b w'(t) dt.
\end{equation}
We can now take the absolute value of both sides and apply the bound from our lemma:
\begin{align}
    |w(b) - w(a)| &= \left| \int_a^b w'(t) dt \right| \nonumber \\
    &\le \int_a^b |w'(t)| dt && \text{(Triangle inequality for integrals)} \nonumber \\
    &\le \int_a^b L_w dt && \text{(By Lemma \ref{lem:bounded_w_derivative}, $|w'(t)| \le L_w$)} \nonumber \\
    &= L_w (b - a). \nonumber
\end{align}
Since this holds for any $a < b$, we can generalize to $|w(h) - w(h')| \le L_w |h - h'|$ for any $h, h' \in \mathbb{R}$. This is the definition of $L_w$-Lipschitz continuity.
\end{proof}

\begin{proof}[Proof of Theorem \ref{thm:stability}]
The proof proceeds in two main parts. First, we establish that the margin function $h_\phi$ is Lipschitz continuous with respect to its parameters $\phi$. Second, we leverage this result to show that the full loss function, $\mathcal{L}(\theta, \phi;x,y_w,y_l)$, is also Lipschitz continuous, which allows us to invoke the final result from prior work.

\paragraph{Part 1: Lipschitz Continuity of the Margin Function ($h_\phi$).}
We begin with the definition of the margin function: $h_{\phi}(x,y_w,y_l) = r_{\phi}(x,y_w) - r_{\phi}(x,y_l)$. By the Mean Value Theorem, there exists a $\bar{\phi}$ on the line segment between $\phi^*$ and $\hat{\phi}$ such that:
\begin{equation}
    h_{\hat{\phi}}(x,y_w,y_l) - h_{\phi^*}(x,y_w,y_l) = \left( \nabla_\phi r_{\bar{\phi}}(x,y_w) - \nabla_\phi r_{\bar{\phi}}(x,y_l) \right)^\top (\hat{\phi} - \phi^*).\nonumber
\end{equation}
Taking the absolute value and applying the generalized Cauchy-Schwarz inequality with the semi-norm $\|\cdot\|_{\Sigma_{\phi} + \kappa I}$ gives:
\begin{align}
    |h_{\hat{\phi}} - h_{\phi^*}| &\le \left\| \nabla_\phi r_{\bar{\phi}}(x,y_w) - \nabla_\phi r_{\bar{\phi}}(x,y_l) \right\|_{(\Sigma_{\phi} + \kappa I)^{-1}} \cdot \|\hat{\phi} - \phi^*\|_{\Sigma_{\phi} + \kappa I}.\nonumber
\end{align}
Under Assumption \ref{assump:smoothness}, the gradient of the reward function is bounded. This implies that the term involving the gradients is also bounded by some constant, which we will call $L_\phi$. Thus, the margin function is $L_\phi$-Lipschitz continuous with respect to $\phi$:
\begin{equation}
    |h_{\hat{\phi}}(x,y_w,y_l) - h_{\phi^*}(x,y_w,y_l)| \le L_\phi \|\hat{\phi} - \phi^*\|_{\Sigma_{\phi} + \kappa I}.\nonumber
\end{equation}

\paragraph{Part 2: Lipschitz Continuity of the Full Loss Function ($\mathcal{L}$).}
Now we analyze the full loss function, $\mathcal{L}(\theta, \phi;x,y_w,y_l) = - w(h_{\phi}(x,y_w,y_l)) \log \sigma(\beta h_\theta(x,y_w,y_l))$.
\begin{align}
    \left|\mathcal{L}(\theta, \phi^*;x,y_w,y_l) - \mathcal{L}(\theta, \hat{\phi};x,y_w,y_l)\right|    &= \left| -\log\sigma(\beta h_\theta(x,y_w,y_l) \right| \cdot \left| w(h_{\phi^*}(x,y_w,y_l)) - w(h_{\hat{\phi}}(x,y_w,y_l)) \right| \nonumber \\
    &\le L_\theta \cdot \left| w(h_{\phi^*}(x,y_w,y_l)) - w(h_{\hat{\phi}}(x,y_w,y_l)) \right| \nonumber \\
    &\le L_\theta \cdot L_w \cdot |h_{\phi^*}(x,y_w,y_l) - h_{\hat{\phi}}(x,y_w,y_l)|\nonumber \\
    &\le L_\theta \cdot L_w \cdot L_\phi \cdot \|\hat{\phi} - \phi^*\|_{\Sigma_{\phi} + \kappa I}.\nonumber
\end{align}
The second inequality holds by Assumption \ref{assump:bounded_policy_term}. The third inequality holds by Lemma \ref{lem:lipschitz_w}. The last inequality holds by Part 1 of this proof. By defining the final Lipschitz constant as $L = L_\theta L_w L_\phi$, we have shown that our loss function is L-Lipschitz continuous:
$$
    \left|\mathcal{L}(\theta, \phi^*;x,y_w,y_l) - \mathcal{L}(\theta, \hat{\phi};x,y_w,y_l)\right| \le L \cdot \|\hat{\phi} - \phi^*\|_{\Sigma_{\phi} + \kappa I}.
$$
With this result established, the final statistical bound on the estimation error follows directly by invoking the general framework for two-step estimators, such as Theorem 4.2 in \cite{chowdhury2024provably}. This completes the proof.
\end{proof}

\begin{proof}[Proof of Proposition \ref{prop:bounded_derivatives}]
For readability in this proof, we suppress the explicit dependence on the sample $(x, y_w, y_l)$ and parameters $(\theta, \phi)$ for functions like $\mathcal{L}$, $h_\theta$, and $h_\phi$, unless required for clarity. The proof for both claims relies on the fact that the weight function, $w(h_\phi)$, is uniformly bounded. From its construction in Eq.~\ref{eq:weight}, there exists a constant $w_{\max}$ such that $|w(h_\phi)| \le w_{\max}$ for all inputs.

\begin{enumerate}
    \item \textbf{Bounded Gradient:} The first derivative of the loss is given by:
    $$
    \frac{\partial \mathcal{L}}{\partial h_{\theta}} = -w(h_\phi) \cdot \beta \sigma(-\beta h_\theta).
    $$
    This expression is a product of the weight function, a constant $\beta$, and the sigmoid function, all of which are bounded. Therefore, their product is uniformly bounded.

    \item \textbf{Bounded Hessian:} The second derivative of the loss is given by:
    $$
    \frac{\partial^2 \mathcal{L}}{\partial h_{\theta}^2} = w(h_\phi) \cdot \beta^2 \sigma(-\beta h_\theta)\sigma(\beta h_\theta).
    $$
    This is a product of the bounded weight function, a constant $\beta^2$, and the derivative of the sigmoid function, $\sigma'(\cdot) = \sigma(\cdot)\sigma(-\cdot)$, which is famously bounded by 1/4. Therefore, the second derivative is also uniformly bounded.
\end{enumerate}
\end{proof}

\section{Detailed Experiment}
\label{app:experiment_details}

\subsection{Detailed Experimental Setup}
\paragraph{Dataset Filtering and Splits.}
\begin{figure}[htpb]
    \centering
    \includegraphics[width=0.6\linewidth]{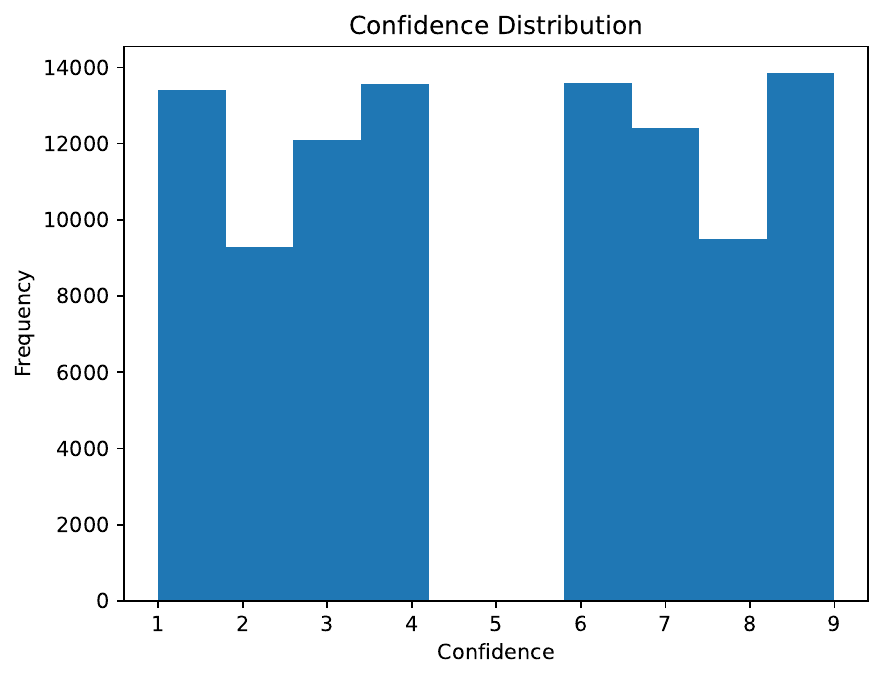}
    \caption{Distribution of confidence scores for the filtered Reddit TL;DR training dataset. The distribution is roughly uniform across non neutral values (1--4 and 6--9), naturally exposing the models to a diverse, realistic mixture of both high agreement and low-agreement preference pairs.}
    \label{fig:confidence_dist}
\end{figure}

For our experiments using the Reddit TL;DR summarization dataset \citep{stiennon2020learning}, we exclude any preference pairs lacking a recorded confidence score. This filtering step is crucial to ensure our models train on a verifiable, realistic distribution of preference clarity. As illustrated in Figure~\ref{fig:confidence_dist}, the resulting data demonstrates a roughly uniform distribution across all non neutral confidence levels (1--4 and 6--9). After filtering, our final training corpus comprises 97,712 pairs, incorporating all standard data splits with the explicit exception of \texttt{valid 1}. To form our fixed evaluation set, we sample first 1,000 held-out preference pairs exclusively from this \texttt{valid 1} split.

\paragraph{Model Implementation.}
All stages of our experiment leverage the \texttt{Qwen2.5-7B-Instruct} \citep{qwen2} model. Following the MADPO two-stage pipeline, we first train a standard reward model using LoRA on the 97,712-pair training set to estimate explicit reward margins ($h_\phi$). Subsequently, we fine-tune a separate policy model using these static margins to compute the adaptive instance-level weights. 

\paragraph{LLM-as-a-Judge Evaluation Protocol.}
We evaluate the aligned policies using Gemini 3 Flash as an impartial judge. To assess generative robustness, the policy responses are generated across a sweep of sampling temperatures $T \in \{0.0, 0.25, 0.5, 0.75, 1.0\}$. The Gemini judge evaluates these responses using a decoding temperature of 0 and a disabled thinking budget to ensure deterministic, reproducible verdicts. The judge is instructed to evaluate the summaries based on accuracy and conciseness using the following prompt:

\begin{quote}
\small
Which of the following is a better TL;DR summary of the Reddit post?

Evaluate the summaries based on how accurately and concisely they capture the essence of the original post. 

Do not favor a summary because it appears first or second. 

Post: \{post\} \\
Summary A: \{summary\_m1\} \\
Summary B: \{summary\_m2\}

FIRST, provide a one-sentence comparison of the two summaries. \\
SECOND, state "A", "B", or "Tie" to indicate your choice.
\end{quote}

To eliminate position bias, every comparison is executed twice, swapping the positions of Summary A and Summary B. The policy is credited with a win only if the judge consistently selects its generation across both permutations. Any discrepancy between the two evaluations defaults to a "Tie". Final performance is reported as the net win rate.

\paragraph{Training Hyperparameters}

\begin{table}[ht]
\centering
\caption{Hyperparameters for Qwen 2.5 7B Policy Optimization.}
\label{tab:hyperparams}
\begin{tabular}{ll}
\hline
\textbf{Hyperparameter} & \textbf{Value} \\ \hline
Optimizer & AdamW \\
Learning Rate & $2 \times 10^{-5}$ \\
Adam Betas ($\beta_1, \beta_2$) & (0.9, 0.95) \\
Optimizer Epsilon & $1 \times 10^{-5}$ \\
Weight Decay & 0.01 \\
Learning Rate Schedule & Cosine Decay \\
Warm-up Ratio & 0.03 (3\% of total steps) \\
Total Epochs & 1 \\
Per-Device Batch Size & 2 \\
Gradient Accumulation Steps & 16 \\
LoRA Rank ($r$) & 32 \\
LoRA Alpha ($\alpha$) & 64 \\
LoRA Dropout & 0.05 \\
LoRA Target Modules & q, k, v, o, gate, up, down (All linear layers) \\ \hline
\end{tabular}
\end{table}
To ensure reproducibility, we provide the full set of hyperparameters used for the Qwen 2.5 7B policy optimization in Table~\ref{tab:hyperparams}. All models (Reward Model, Vanilla DPO, $\beta$-DPO, and MADPO) were trained using the same base configuration to ensure a fair comparison.

\subsection{Omitted Analysis}
\label{app:sensitivity_analysis}
\paragraph{Sensitivity Analysis.}
To complement our main results, we conduct a sensitivity analysis on MADPO's two key hyperparameters: the margin threshold ($\tau$) and the amplification intensity ($c_{\max}$). To ensure a strictly diagnostic evaluation that isolates the mathematical impact of these variables, we fix the base temperature at $\beta = 0.1$ for all models for this analysis. 

Furthermore, to ensure our results are not obscured by sampling noise, we restrict this sensitivity analysis entirely to greedy decoding. Evaluating at higher stochastic temperatures introduces sampling variance that can obfuscate the pure alignment dynamics we aim to measure. By removing this variance and controlling the base temperature, we can accurately observe how stable and predictable our novel components are when evaluated against the unaligned Qwen 2.5 7B base model.

\paragraph{Sensitivity to Margin Threshold ($\tau$).}
We first evaluate the sensitivity of the margin threshold by varying $\tau \in \{3, 5, 7, 10\}$ while keeping the amplification intensity fixed at $c_{max}=4$. Figure~\ref{fig:sensitivity} (Left) demonstrates that while the trend is non-monotonic, MADPO's performance remains robustly high across the entire sweep, never dropping below a 40\% net win rate against the base model. The fluctuations suggest a delicate balance in the real-world dataset: setting $\tau$ requires identifying enough "hard" pairs to amplify without over-classifying inherent noise as informative signal. Nonetheless, the consistently high absolute performance proves that MADPO is not brittle to its threshold configuration.

\paragraph{Sensitivity to Amplification Intensity ($c_{max}$).}
Next, we examine the impact of the amplification intensity by varying $c_{max} \in \{2, 4, 6\}$ while keeping the margin threshold fixed at $\tau=7$. As illustrated in Figure~\ref{fig:sensitivity} (Right), we observe a strict, monotonic increase in performance as the intensity grows. The net win rate scales from approximately 39.0\% at $c_{max}=2$ up to 58.7\% at $c_{max}=6$. This predictable, linear trend reinforces the primary conclusion of our ablation study: aggressively amplifying the learning signal for hard, low-margin preference pairs is a highly stable and effective driver of alignment performance.

\begin{figure}[htpb]
    \centering
    \begin{minipage}{0.48\textwidth}
        \centering
        \includegraphics[width=\linewidth]{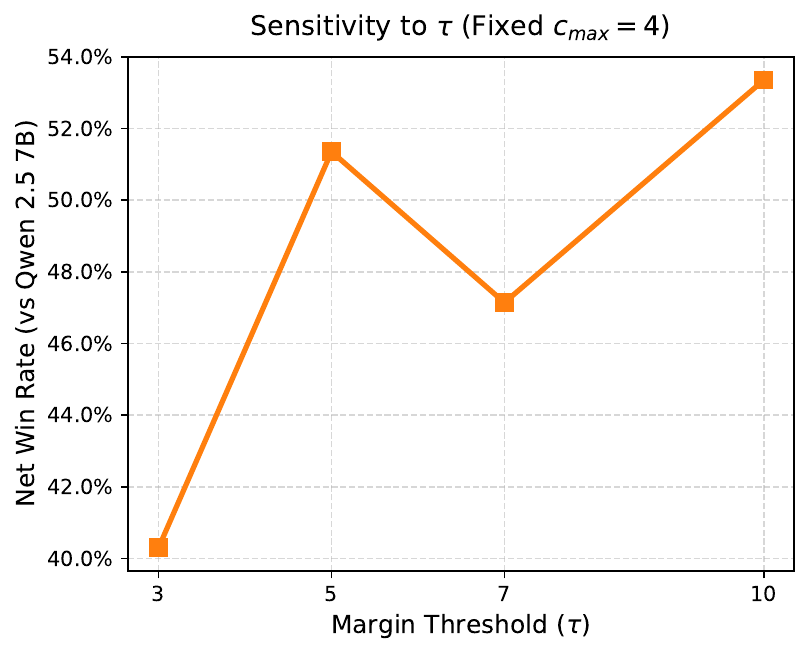}
        \vspace{0.1cm}
        \textbf{(a) Sensitivity to $\tau$ (Fixed $c_{max}=4$)}
    \end{minipage}
    \hfill 
    \begin{minipage}{0.48\textwidth}
        \centering
        \includegraphics[width=\linewidth]{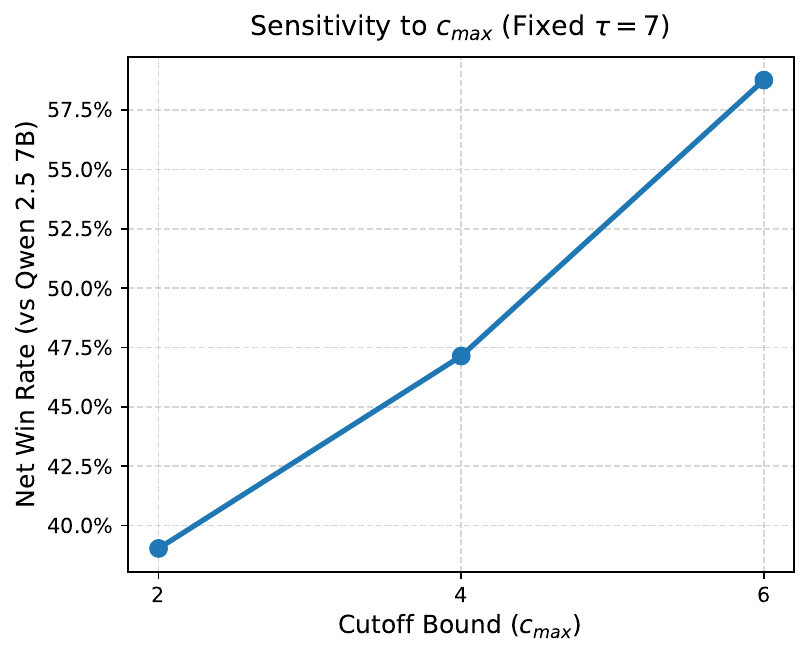}
        \vspace{0.1cm}
        \textbf{(b) Sensitivity to $c_{max}$ (Fixed $\tau=7$)}
    \end{minipage}
    \caption{Diagnostic sensitivity analysis of MADPO's key hyperparameters evaluated with greedy decoding against the unaligned Qwen 2.5 7B base model. \textbf{(a)} Performance remains robustly high across various margin thresholds ($\tau$). \textbf{(b)} Performance scales monotonically with increased amplification intensity ($c_{max}$).}
    \label{fig:sensitivity}
\end{figure}

\section{Related Work}

Recent research in preference alignment has sought to address the limitations of DPO's fixed temperature, which can lead to overfitting on easy, high-margin pairs. Prominent approaches such as IPO \citep{azar2024general} and $\beta$-DPO \citep{wu2024beta} tackle this problem with mechanisms that are applied at a coarse granularity. IPO proposes a uniform target margin for all samples, while $\beta$-DPO's strategies operate at the batch level.

While an improvement, the batch-level mechanisms of $\beta$-DPO have several notable drawbacks. First, its adaptation is a coarse approximation of the instance-level ideal. A single training batch can easily contain a mix of high- and low-margin pairs, yet the $\beta$-batch method applies a single, compromised temperature to all of them. Second, the linear adaptation rule, $\beta_{\text{batch}} = \beta_0( 1  + m \cdot(\bar{h}_\theta - h_0))$, can be unstable; for difficult batches where the average margin $\bar{h}_\theta$ is negative, the resulting temperature $\beta_{\text{batch}}$ can also become negative, leading to a divergent objective that actively learns to prefer the dispreferred response. Finally, the batch-dependent temperature complicates hyperparameter selection, as the absence of a fixed $\beta$ makes it difficult to perform reliable cross-validation to find the optimal tuning parameters.

Our work, MADPO, is distinct in that it provides a fully instance-level and data-preserving solution that avoids these issues. By applying a continuous, adaptive weight to each training sample based on its unique reward margin, our method can granularly control the learning signal. This allows it to be aggressive on hard pairs and conservative on easy ones within the same batch, providing a more flexible and stable approach to preference alignment.

Beyond the methods discussed above, other recent works have extended the DPO framework in various directions. For instance, methods like SimPO \citep{meng2024simpo} and $\alpha$-DPO \citep{wu2024alpha} focus on simplifying the objective by removing the need for an explicit reference policy. Another line of work has explored reweighting preference data. Omni-DPO \citep{peng2025omni} dynamically weights pairs based on both their inherent quality and the model's current learning state, while WPO \citep{zhou2024wpo} reweights off-policy preference data to more closely resemble the on-policy distribution. While these methods also use a reweighting scheme, their motivation is distinct from that of MADPO. Whereas these approaches weight data based on the policy's dynamic state or distributional properties, MADPO's weighting is based on a static, external signal of sample difficulty derived from the reward margin, $h_\phi$. Our goal is not to correct for distributional shift, but to granularly control the regularization strength for each individual sample based on its intrinsic difficulty.

Finally, our work is situated within the offline preference optimization paradigm, which is distinct from online reinforcement learning methods such as PPO \citep{ouyang2022training} and GRPO \citep{shao2024deepseekmath}. While online methods can achieve high performance by sampling directly from the policy to receive real-time rewards, they entail significant computational overhead and are often prone to training instability. MADPO is motivated by the need for a more practical, sampling-free approach that effectively handles mixed-quality human preference datasets without the need for expensive environment interaction or complex actor-critic architectures. By providing a stable, instance-level solution for static data, MADPO offers a resource-efficient alternative that prioritizes data-centric robustness over the iterative sampling required by online RL frameworks.

\end{document}